\documentclass{article}

\usepackage{PRIMEarxiv}
\usepackage{natbib}
\setcitestyle{numbers}
\setcitestyle{square}
\usepackage[utf8]{inputenc} 
\usepackage[T1]{fontenc}    
\usepackage[hidelinks]{hyperref}       
\usepackage{url}            
\usepackage{booktabs}       
\usepackage{amsfonts}       
\usepackage{nicefrac}       
\usepackage{microtype}      
\usepackage{lipsum}
\usepackage{graphicx}
\graphicspath{{figures/}}     
\usepackage{subfig}

\usepackage{amssymb}

\usepackage{bm,amsthm}
\usepackage{commath,dsfont,cleveref,nicefrac,tikz,xfrac}
\usepackage{booktabs}
\usepackage{mathtools, amssymb, bbm}

\newtheorem{Theorem}{Theorem}

\newtheorem{remark}{Remark}
\newtheorem{assumption}{Assumption}

\newtheorem {lemma} [Theorem]    {Lemma}
\newtheorem {corollary}  [Theorem]    {Corollary}

\newtheorem {theorem}[Theorem]    {Theorem}

\newcommand{\eps}{{\varepsilon}}
\newcommand{\C}{{G_*}}
\newcommand{\Csa}{G}
\renewcommand{\O}{{O}}

\newcommand{\TV}{{{tv}}}

\newcommand{\nhalf}{{\nicefrac{-1}{2}}}
\newcommand{\half}{{\nicefrac{1}{2}}}
\newcommand{\E}{\mathbb{E}}
\renewcommand{\P}{\mathbb{P}}
\newcommand{\nfrac}{\nicefrac} 
\newcommand{\Expec}[1]{\operatorname{\E}\left[{#1}\right]}
\newcommand{\Prob}[1]{\operatorname{\P}\big\{{#1}\big\}}
\newcommand{\N}{\mathbb{N}}
\newcommand{\R}{\mathbb{R}}

\def\1{\bm{1}}
\def\0{\bm{0}}
\let\norm\relax
\DeclarePairedDelimiter\norm{\|}{\|}

\newcommand{\row}{{\mathrm{row}}}

\newcommand{\F}{{\sigma}}

\newcommand{\distort}{{\gamma}}
\newcommand{\relu}{{\mathrm{relu}}}
\newcommand{\BN}{{\phi}}

\newcommand{\Normal}{\mathcal{N}}
\newcommand{\diag}{\mathrm{diag}}

\newcommand{\tv}[2]{\norm{\mu_{#1}-\mu_{#2}}_{tv}}









\def\eqref#1{equation~\ref{#1}}


















\DeclareMathAlphabet{\mathsfit}{\encodingdefault}{\sfdefault}{m}{sl}
\SetMathAlphabet{\mathsfit}{bold}{\encodingdefault}{\sfdefault}{bx}{n}













\usepackage{natbib}

\title{On Bridging the Gap between Mean Field and Finite Width in Deep Random Neural Networks with Batch Normalization
}

\author{
  Amir Joudaki \\
  BMI, ETH Zürich \\
  \texttt{amir.joudaki@ethz.ch} \\
   \And
  Hadi Daneshmand\\
  MIT LIDS \\
  \texttt{hdanesh@mit.edu} \\
   \And
  Francis Bach\\
   INRIA-ENS-PSL Paris \\ 
   \texttt{francis.bach@inria.fr}\\
}

\begin{document}
\maketitle

\begin{abstract}
Mean field theory is widely used in the theoretical studies of neural networks. In this paper, we analyze the role of depth in the concentration of mean-field predictions, specifically for deep multilayer perceptron (MLP) with batch normalization (BN) at initialization. By scaling the network width to infinity, it is postulated that the mean-field predictions suffer from layer-wise errors that amplify with depth. 
We demonstrate that BN stabilizes the distribution of representations that avoids the error propagation of mean-field predictions. 
This stabilization, which is characterized by a geometric mixing property, allows us to establish concentration bounds for mean field predictions in infinitely-deep neural networks with a finite width.
\end{abstract}


\section{Introduction}
There is a growing demand for a theoretical framework to characterize and enhance the robustness, safety, computational and statistical efficiency of neural networks.  Mean field theory, from statistical mechanics,  has provided insights into wide neural networks with an infinite width. Going beyond the microscopic analysis of individual neurons, mean field analysis has revealed collective behaviors of neurons at initialization~\cite{pennington2018emergence,yang2019mean,pennington2017nonlinear}, during training \cite{jacot2018neural,bach2021gradient,lee2019wide}, and post-training \cite{chizat2020implicit,ba2019generalization}.

 This paper specifically focuses on the applications of mean field theory at initialization when the network weights are random. Since the seminal works by~\citet{pmlrv9glorot10a} and~\citet{saxe2013exact} demonstrated the effects of initialization on training, applications of mean field theory at initialization have led to numerous insights. Mean field theory has been leveraged to reveal links between wide neural networks and Gaussian processes~\cite{neal2012bayesian,matthews2018gaussian,jacot2018neural}, study concentration of singular values of input-output Jacobians~\cite{pennington2018emergence,feng2022rank}, and to design activation functions~\cite{klambauer2017self,ramachandran2017searching,li2022neural}. Remarkably,~\citet{xiao2018dynamical} introduce an initialization enabling the training of convolutional networks with $10000$ layers.

Despite the successes of mean field theory, there is an inherent approximation error between the infinite-width mean field regime and the finite width used in practice. \citet{matthews2018gaussian} observe that increasing depth amplifies this approximation error, and when the network is sufficiently deep, the mean-field predictions break down. To control this error propagation,~\citet{matthews2018gaussian} propose to increase the network width proportional to depth. Other studies have similarly suggested scaling the network width and depth to infinity while keeping their ratio depth$/$width constant~\cite{hanin2019finite,li2021future}. In a similar spirit,~\citet{hanin2022correlation} establishes an $O(\text{depth$/$width})$-concentration bound for mean-field predictions.

In the present work, we are fundamentally interested in the following question: Can we achieve bounded mean field error with infinite depth, even when the width is finite? 
We observe that mean-field predictions are very accurate for MLPs with batch normalization. In particular, we observe the errors stabilize with depth. Batch normalization biases the hidden representations towards the mean field solutions, with the deviations bounded by
\begin{align}
 e^{-\alpha\cdot \text{depth}/2} + \text{width}^{\nhalf},
\end{align}
up to constants 
for some $\alpha >0$ that arises from a technical assumption about the underlying dynamics (see the formal statement in Theorem~\ref{thm:main_ESD}).
Thus, the mean-field predictions become quickly accurate after a few layers. While mean-field predictions suffer from  $O(\text{depth}/\text{width})$ concentration bounds without batch normalization \cite{li2022neural}, the established concentration bound only depends on the width for deep neural networks.

Bridging the gap between infinite and finite width analyses, one can use mean-field predictions to explain representations in neural networks with batch normalization.
 ~\citet{yang2019mean} elegantly show a wide range of activations have well-conditioned representations in MLPs with batch normalization for neural networks with infinite widths. The established concentration bound translates this result to neural networks with finite widths.
 
\subsection{Our contributions}
Under the assumptions that the weights are Gaussian and the chain of hidden representations is geometric ergodic, we prove:
\begin{itemize}
    \item The spectrum of hidden representation of BN-MLP stabilises around the mean field predictions for deep neural networks with finite widths. 
    \item The hidden representations of MLPs with BN for wide range of activations, used in practice, are well-conditioned with a non-asymptotic bound for their deviations.
    \item Sufficient conditions to bridge analyses for neural networks with finite and infinite width. 
\end{itemize}


\section{Motivation}
Numerous studies~\cite{saxe2013exact,feng2022rank,yang2019mean} have provided valuable insights into the training of deep neural networks by analyzing the input-output Jacobians of neural networks at initialization. For example, ~\citet{feng2022rank} have shown that the rank of the input-output Jacobian of neural networks without normalization at initialization diminishes exponentially with depth. 
The spectrum of Jacobians is closely related to the spectra of Gram matrices. 
A Gram matrix, also referred to as G-matrix, contains the inner products of samples within a batch (\eqref{eq:gram_matrix}). Thus, a degenerate G-matrix for the penultimate layer implies that the outputs are blind to the inputs~\cite{feng2022rank,li2022neural}.  
Diminishing rank in the last hidden layer occurs in various neural architectures, including MLPs~\cite{saxe2013exact}, convolutional networks~\cite{daneshmand2020batch}, and transformers~\cite{dong2021attention}, and leads to ill-conditioning of the input-output Jacobian, which slows training~\cite{daneshmand2021batch,pennington2018emergence,yang2019mean}. ~\citet{saxe2013exact} have shown that avoiding rank collapse can accelerate the training of deep linear networks, making it a focus of theoretical and experimental research~\cite{pennington2018emergence,daneshmand2020batch,daneshmand2021batch}.

Mean field theory has been effectively used to analyze spectra singularities in random deep neural networks \cite{pennington2017nonlinear,pennington2018emergence,xiao2018dynamical,li2022neural,yang2019mean}. In particular, mean field theory provides guidelines to avoid the rank collapse of representation that enhances the training of deep neural networks~\cite{xiao2018dynamical}. Furthermore, mean field theory provides insights into the interplay between the spectra singularities and neural architectures. \cite{yang2019mean} proves batch normalization layers, which is a key component of deep neural networks \cite{ioffe2015batch}, avoid the rank collapse of gram matrices in a mean-field regime. Interestingly, \citet{daneshmand2021batch,daneshmand2020batch} proves that this mean-field analysis accurately holds for neural networks with a finite width and linear activations. For these neural networks, batch normalization iteratively biases the gram matrices to the identity matrix as network depth grows~\cite{daneshmand2021batch} that ensures the non-singularity of gram matrices at deep layers. To the best of our knowledge, this is the only result that proves non-vacuous concentration bounds for mean-field predictions in the standard settings when deep neural networks have finite widths. However, the result of \citet{daneshmand2021batch} is limited to neural networks with linear activations. We characterize sufficient conditions to extend this result to a wide range of non-linear activations.

\section{Problem settings and background}

\paragraph{Notation and terminology.}

$I_n$ denotes the identity matrix of size $n\times n.$ 

$\otimes$ refers to Kronecker product. $\mu_X$ refers to the probability measure of the random variable $X$. We use $f\lesssim g, g \gtrsim f$ and $f = \O(g)$ to denote the existence of an absolute constant $c$ such that $f\le c\; g.$ 
 
$\norm{v}$ for vector $v$ denotes the $L^2$ norm.
$\norm{C}$ for matrix $C$ denotes the $L^2$ operator norm $\norm{C}=\sup_{x\in\R^n}\norm{C x}/\norm{x}$, $\norm{C}_F$ denotes Frobenius norm, and $\kappa(C)$ denotes condition number $\kappa(C)=\norm{C}\norm{C^{-1}}.$ Both $h_{r\cdot}$ and $\row_r(h)$ denote row-vector representation of the $r$-th row of $h.$ 

\paragraph{Setup.}
Le $h_\ell\in \R^{d\times n}$ denote the hidden representation at layer $\ell$, where $n$ corresponds to the size of the mini-batch, and $d$ denotes the width of the network that is kept constant across all layers. The sequence  $\{h_\ell\}$ is a Markov chain as
\begin{align} \label{eq:chain}
h_{\ell+1}:=W_\ell \F\circ\BN(h_\ell), && W_\ell\sim\Normal(0,\nfrac1d)^{d\times d},
\end{align}
where $h_0\in\R^{d\times n}$ is the input batch, $\F$ is the element-wise activation function, and $\BN$ is the batch normalization \cite{ioffe2015batch}, which ensures each row has zero mean and unit variance:
\begin{align*}
\BN(x) = \frac{x - \overline{x}}{\sqrt{\text{Var}(x)}}, && \forall r: \row_r(\BN(h)) = \BN(\row_r(h)).
\end{align*}
The G-matrix $G_\ell$ is defined as the matrix of inner products of hidden representations at layer $\ell$
\begin{align}\label{eq:gram_matrix}
G_\ell := \frac1d (\F\circ\BN(h_\ell))^\top (\F\circ\BN(h_\ell)).
\end{align}

The mean field approximation for $G_\ell$ is defined through the following recurrence \cite{yang2019mean}
\begin{align}\label{eq:MF_recurrence}
    \overline{G}_{\ell+1} = \Expec{(\F \circ \BN(w_\ell))^{\otimes 2}}, \quad w_\ell\sim \Normal(0,\overline{G}_\ell)
\end{align}
where $\overline{G}_{0}=G_0$ for the input G-matrix. Note that mean field Gram matrices $\overline{G}_\ell$ are deterministic, while Gram matrices $G_\ell$ are random for $\ell>0$. 
By casting a stochastic process to a deterministic process, the mean-field approach simplifies the analysis of Gram matrices. Intuitively, one could expect that the recursion step in~\eqref{eq:MF_recurrence} multiple times, it may converge to a fixed point of this equation. Inspired by this intuition, \citet{yang2019mean} study the fixed-points of the recurrence denoted by $\C$, which obeys

\begin{align}\label{eq:MF_stable_C}
    \C = \Expec{(\F\circ\BN(w))^{\otimes 2}}, \quad  w\sim\Normal(0,\C)
\end{align} 
\citet{yang2019mean} characterize stable and unstable fixed points for neural networks with batch normalization. The recurrence in equation~\eqref{eq:MF_recurrence} is attracting in a local neighborhood of a stable fixed point. More interestingly, \citet{yang2019mean} establish even the global stability for networks with linear activations. This global stability does not ensure the convergence of $G_\ell$ (even) to a local neighborhood of $\C$ since the mean-field approximation suffers from an $O(d^{-1/2})$-approximation error in each layer. Increasing depth may amplify this error, in that $G_\ell$ diverges from $\overline{G}_\ell$ for a large $\ell$. We investigate the error propagation with depth for this mean-field prediction.

\section{How can errors propagate through depth?}
\subsection{Empirical observation}
To demonstrate the discrepancy between mean-field analysis and the practical settings of neural networks with finite width, we will use a toy example of a multi-layer perceptron (MLP) without batch normalization and with linear activations. This corresponds to a chain with $\F\circ\BN=id$ in~\eqref{eq:chain}. We will use this toy example to show that the mean-field approximation error amplifies with depth when networks do not have normalization layers and then contrasts this for neural networks with batch normalization. Notably, this is a warm-up illustration and our theoretical analysis is not limited to this example. 

Note that the identity matrix of size $n$ is a mean field fixed point satisfying~\eqref{eq:MF_stable_C}, as it holds trivially $\Expec{w^{\otimes 2}}$ where w$\sim\Normal(0,I_n)$. However, as evidenced in Figure~\ref{fig:rapidly_mixing}, we observe that the Frobenius distance between $G_\ell$ from $\C$ increases at an exponential rate in the number of layers for networks without batch normalization. This means that the mean-field approximation error amplifies with depth, making it necessary to add further refinements for neural networks without normalization layers, as previously reported by ~\citet{li2022neural}.

In contrast, when we repeat the experiment by adding batch normalization to the MLP, we observe that as the depth grows, the G-matrices $G_\ell$ converge to a neighborhood of the mean-field G-matrix $\C$. Instead of error amplification with depth, in BN-MLP the errors stabilize up to a constant. Figure~\ref{fig:rapidly_mixing_width} shows that this constant is inversely proportional to the network width $\sqrt{d}$.
These observations suggest that mean field predictions by~\cite {yang2019mean} are highly accurate for finite width in the presence of batch normalization. On contrary, the existing concentration bounds for neural networks without batch-normalization depend on $O(\text{depth}/\text{width})$, hence breaking in an infinite depth regime. 

\begin{figure}
\subfloat[With/Without BN\label{fig:rapidly_mixing}]{
\includegraphics[width=0.5\textwidth]{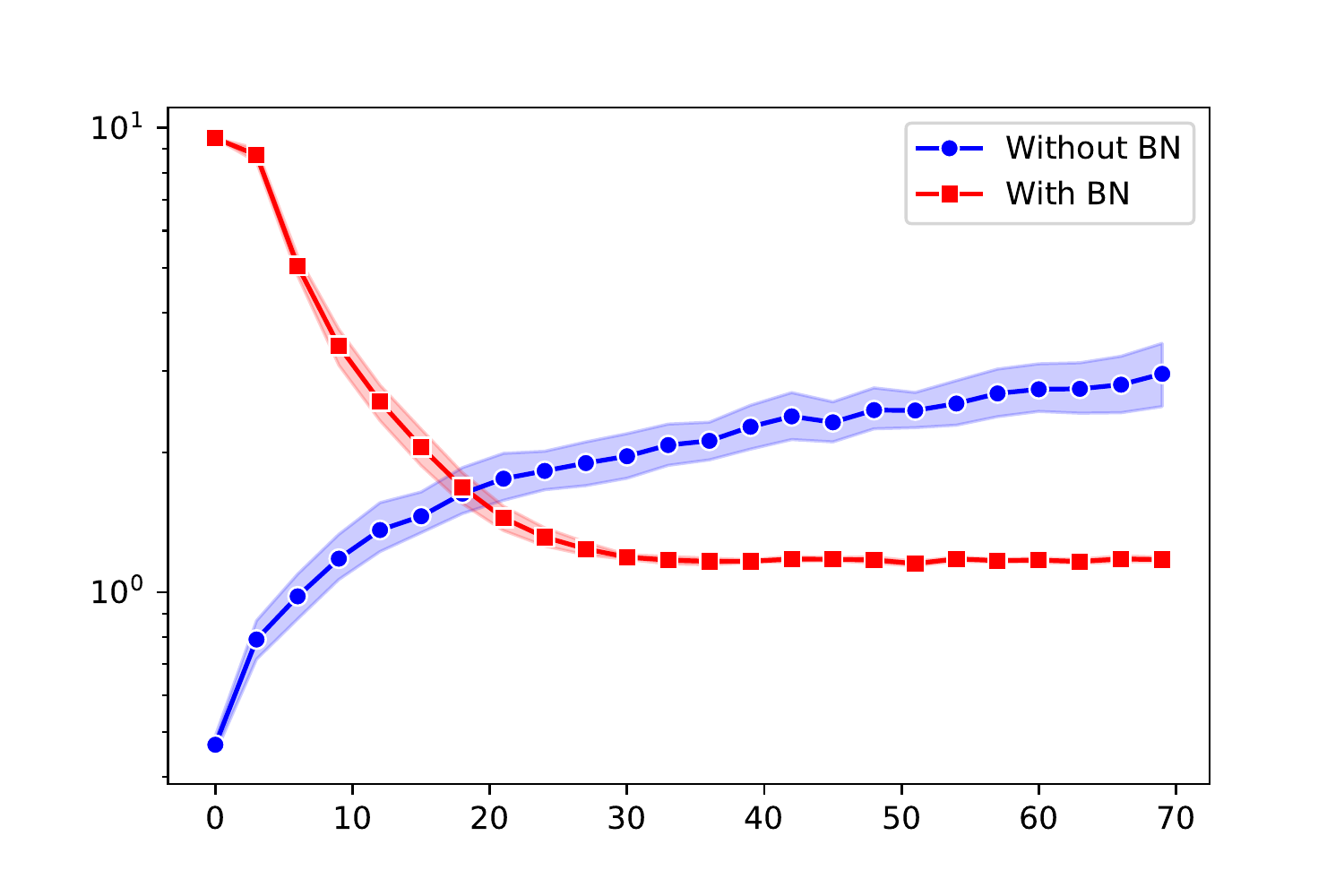}} 
\subfloat[Width effect\label{fig:rapidly_mixing_width}]{
\includegraphics[width=0.5\textwidth]{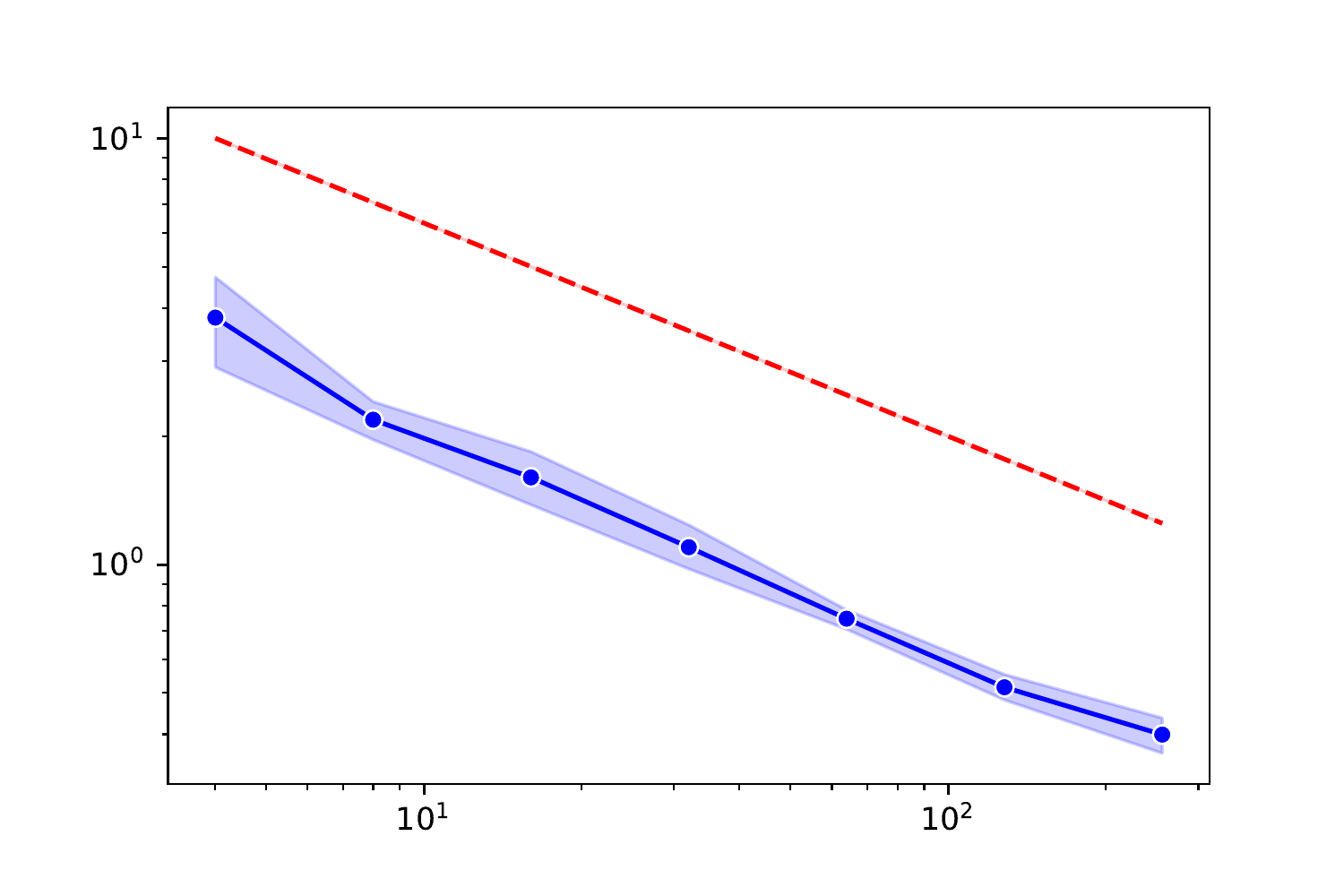}}
 \caption{(a) \footnotesize{\textit{mean field error amplification with(out) batch-normalization.} Horizontal axis: number of layers $\ell$ (linear); vertical axis (log-scale): $\|G_\ell- \C\|_F$, $n=5,d=1000$.  The lines show mean and shades for $90\%$ confidence intervals over $10$ independent simulations.} (b) \footnotesize{Mean field deviations from Gram matrices for BN-MLP with linear activation $ n=10$. The plot specifically shows the average of $\| G_\ell - \C \|_F$ over $40<\ell<600$  on y-axis (log-scale) vs. width on the x-axis (log-scale). The dashed line marks $2n/\sqrt{d}$.}}
\end{figure}
 
    

    

\subsection{Analytical illustration}
In this section, we will analytically investigate the empirical observations on the error propagation with depth.

A mean-field analysis lets $d$ tend to infinity, which implies that G-matrix $W_k^\top W_k$ converges to its expected value, $I_n$, almost surely. Replacing this in mean-field recurrence of \eqref{eq:MF_recurrence} yields the mean-field Gram matrices are the identity matrix across all the layers, namely $\overline{G}_\ell = I_n$ holds for all $\ell$.

For any finite $d$, the mean-field predictions suffers from $O(d^{-1/2})$ approximation in each layer. It is straightforward to bound of deviations of the original gram matrix from the mean-field prediction  using union-type reasoning to prove an overall bound for deviation from $\C$.  However, this approach leads to an accumulation of rates of failure when depth is larger than width $\E \norm{G_\ell - I_n}\le \exp(\nfrac{3n\ell}{2d})$. This formula hints at an interesting interplay between depth and width that acts as an indicator of the discrepancy between mean field predictions and Gram matrices (informally): \begin{align}\label{eq:mean_field_discrepancy_informal}
\text{error propagation} \approx \exp(\frac{ \text{depth}}{\text{width}}).
\end{align}

 This bound, which was derived for the conceptually simple case of a linear MLP, is consistent with the bounds previously reported in the literature~\cite{matthews2018gaussian,hanin2019finite}. In an insightful observation,~\citeauthor{li2022neural} showed that the error of an infinite-width-and-depth network behaves like a geometric Brownian motion. By approximating the dynamics of Gram matrices with a stochastic differential equation (SDE), they were able to accurately predict the degenerate outputs for a vanilla MLP in realistic settings (see section 2.2 and equation 2.6). In fact, ~\citet{li2022neural} leverage their SDE approximation to carefully shape activation functions to achieve non-degenerate Gram matrices in infinite-depth. However, this does not address the stabilizing effects of normalization in the absence of activation shaping, which is the main focus of the current work.

\citet{daneshmand2021batch} mathematically prove the observation for MLP with BN and linear activations. According to this paper, batch normalization iteratively tends the Gram matrices $G_\ell$ to the identity matrix as $\ell$ grows. The identity matrix is the mean-field Gram matrix when tending the network width to infinity~\cite{yang2019mean}. Indeed, \citet{daneshmand2021batch} establish the first concentration bound for mean-field predictions in the presence of batch normalization. However, the result of \citet{daneshmand2021batch} is limited to linear activations. Here, we extend this result to non-linear activations under an assumption from Markov chain theory.

\section{Main results}

\subsection{Geometric ergodic assumption} 
While the chain of hidden representations obeys a non-linear stochastic recurrence, the distribution associated with the representation obeys a linear fixed-point iteration determined by the Markov kernel $K$ associated with the chain $h_\ell$. The distribution of $h_\ell$, denoted by $\mu_{\ell},$ obeys 
\begin{align} \label{eq:dist_recurrence}
    \mu_{\ell+1} = T(\mu_{\ell}), \quad T(\mu) := \int K(x,y) d\mu(y).
\end{align}
The fixed-points of the above equation are invariant distributions of the chain, which we denote by $\mu_*$.  Recall that total variation for distribution over $d\times n$ matrices can be defined as $\tv{X}{Y}:=\sup_{A\subseteq\R^{d\times n}} |\mu_X(A)-\mu_Y(A)|.$
Remarkable, $\norm{ \mu_\ell - \mu_*} \leq \| \mu_{\ell-1} - \mu_* \|_{tv}$ holds for all $\ell$.
We assume the chain obeys a strong property that ensures the convergence to a unique invariant distribution.

\begin{assumption}[Geometric ergodicity]\label{ass:rapid_mixing}
 We assume the chain of hidden representations admits a unique invariant distribution. Furthermore, there is constant $\alpha$ ($\alpha>0$) such that
\begin{align*}
\tv{\ell}{*} \le (1-\alpha)^\ell \tv{0}{*},
\end{align*}
holds almost surely for all $h_0$.
\end{assumption}

The geometric ergodic property is established for various Markov chains, such as the Gibbs sampler, state-space models~\cite{eberle2009markov}, hierarchical Poisson models~\cite{rosenthal1995minorization}, and Markov chain Monte Carlo samplers~\cite{jones2001honest}. We conjecture that the chain of hidden representations is geometric ergodic. In particular, the sufficient condition of  \citet{doeblin1938deux} potentially holds for the chain of hidden representations. Remarkably,  Doeblin's conditions hold when Markov chains can explore the entire state space~\cite{eberle2009markov}.
Intuitively speaking, when $h_\ell$ has full rank, the Gaussian product $W_\ell h_\ell$ may explore the entire $\R^{d\times n}$. 
We leave the proof of the geometric ergodic property to related research in Markov chain theory.

\subsection{Main results for standard activations}
Under geometric ergodicity, the next theorem proves a spectral concentration for $G_\ell$ around $\C$, for activations commonly used in practice denoted by the set $\mathcal{F}:=\{\relu,\tanh,\mathrm{sigmoid},\sin,\mathrm{selu},\mathrm{celu}\}$.  

\begin{theorem}[Spectral concentration of BN-MLP]\label{thm:main_ESD}
Consider the Markov chain $\{h_\ell\}$ for the BN-MLP chain with activation $\F\in \mathcal{F}$, and Gram matrices $\{G_\ell\}.$ Let $\C$ denote the stable Gram matrix, and $\lambda_i$'s and $\lambda^*_i$'s eigenvalues of $G_\ell$ and $\C$ respectively (descending). Assuming that $\{h_\ell\}$ obeys Assumption~\ref{ass:rapid_mixing} with $\alpha\in(0,1],$ and $\C$ is non-degenerate, for sufficiently large $d\gtrsim n^2\norm{\C^{-1}},$ define
If input G-matrix $G_0$ is non-degenerate, we have
\begin{align}
\Prob{\bigwedge_{i=1}^n\big\lvert{\lambda_i}/{\lambda_i^*}-1\big\lvert\gtrsim t(e^{-\frac{\alpha\ell}{2}}+\eps\ln\frac{1}{\eps}) } \ge 1-t^2, && \eps:=\frac{n\norm{\C^{-1/2}}}{\sqrt{d}}.
\end{align}
\end{theorem}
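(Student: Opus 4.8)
The plan is to reduce the simultaneous statement about all $n$ eigenvalue ratios to a single operator-norm estimate on the \emph{whitened} Gram deviation, and then to bound that deviation by separating a ``mixing'' term, governed by Assumption~\ref{ass:rapid_mixing}, from a ``finite-width'' term, governed by matrix concentration anchored at the mean-field fixed point $\C$. For the reduction, put $M_\ell:=\C^{-1/2}G_\ell\C^{-1/2}-I_n$. Since $\lambda_{\min}(I_n+M_\ell)\le\frac{x^\top G_\ell x}{x^\top\C x}\le\lambda_{\max}(I_n+M_\ell)$ for every $x\ne0$, the Courant--Fischer min--max principle gives $\lambda_{\min}(I_n+M_\ell)\,\lambda_i^*\le\lambda_i\le\lambda_{\max}(I_n+M_\ell)\,\lambda_i^*$ simultaneously for all $i$, so $\max_i\lvert\lambda_i/\lambda_i^*-1\rvert\le\norm{M_\ell}$. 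It then suffices to show $\big(\E\norm{M_\ell}^2\big)^{1/2}\lesssim e^{-\alpha\ell/2}+\eps\ln\tfrac1\eps$; the stated probability $1-t^2$, and the appearance of $\ell/2$ rather than $\ell$ in the exponent, both come out of Chebyshev's inequality.

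To split $M_\ell$ I would apply Assumption~\ref{ass:rapid_mixing} via the maximal coupling of $\mu_\ell$ with $\mu_*$: on a common space there is $\tilde h\sim\mu_*$ with $\P(h_\ell\ne\tilde h)\le(1-\alpha)^\ell\le e^{-\alpha\ell}$, and on $\{h_\ell=\tilde h\}$ the two Gram matrices coincide. Every $\F\in\mathcal F$ is Lipschitz and maps the batch-normalization sphere $\{\norm v=\sqrt n\}$ into a ball of radius $\O(\sqrt n)$, so each column of $\F\circ\BN(\cdot)$ has norm $\O(\sqrt n)$ and hence $\norm{G_\ell},\norm{\tilde G}\lesssim n$ deterministically; this a priori bound controls $\norm{M_\ell}$ on the rare event $\{h_\ell\ne\tilde h\}$, giving $\E\norm{M_\ell}^2\lesssim\mathrm{poly}(n,\norm{\C^{-1}})\,e^{-\alpha\ell}+\E_{\mu_*}\norm{\C^{-1/2}\tilde G\C^{-1/2}-I_n}^2$. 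This produces the $e^{-\alpha\ell/2}$ term and reduces the problem to the stationary fluctuation of the whitened Gram matrix.

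For the stationary estimate I would use the key identity that, conditionally on $h_{k-1}$, the rows of $h_k=W_{k-1}\F\circ\BN(h_{k-1})$ are i.i.d.\ $\Normal(0,G_{k-1})$ (the rows of $W_{k-1}$ being i.i.d.\ $\Normal(0,I_d/d)$), so $\E[G_k\mid h_{k-1}]=\Expec{(\F\circ\BN(w))^{\otimes 2}}$ with $w\sim\Normal(0,G_{k-1})$: the conditional mean of $G_k$ is exactly one step of the mean-field recurrence~\eqref{eq:MF_recurrence}, whose fixed point is $\C$ by~\eqref{eq:MF_stable_C}. Conjugating by $\C^{-1/2}$ and calling the resulting map $\Phi$ (so $\Phi(I_n)=I_n$), along the stationary chain $\tilde M_k:=\C^{-1/2}\tilde G_k\C^{-1/2}$ obeys $\tilde M_k-I_n=\bigl(\tilde M_k-\Phi(\tilde M_{k-1})\bigr)+\bigl(\Phi(\tilde M_{k-1})-\Phi(I_n)\bigr)$. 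The first summand has zero conditional mean and is an average of $d$ i.i.d.\ rank-one matrices of norm $\O(n\norm{\C^{-1/2}}^2/d)$, which matrix Bernstein places at the finite-width scale $\eps$ (once the previous Gram matrix is known to lie near $\C$, which the unrolling below makes self-consistent); the second summand is contracted by the \emph{local stability} of the mean-field map at $\C$, i.e.\ $\rho:=\norm{D\Phi|_{I_n}}<1$ in the appropriate norm, a property that holds for each $\F\in\mathcal F$ by the fixed-point analysis of~\citet{yang2019mean} --- the Jacobian of the Gaussian-expectation map being evaluated through Stein's identity / Gaussian integration by parts, with a smoothing step for $\F=\relu$. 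Unrolling this recursion $\O(\ln\tfrac1\eps)$ steps into the stationary past damps the contribution of the far-away state below $\eps$ (via $\rho$ to a power $\O(\ln\tfrac1\eps)$) while each unrolled step adds a martingale-difference fluctuation at scale $\eps$; using that these fluctuations are orthogonal in the Frobenius inner product (their conditional means vanish) gives $\E_{\mu_*}\norm{\tilde M-I_n}^2\lesssim(\eps\ln\tfrac1\eps)^2$, the logarithm counting the number of recurrence steps that must be tracked.

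The main obstacle is the local-stability input: proving a \emph{uniform} spectral gap $1-\rho\ge c>0$ over all of $\mathcal F$ --- treating the non-smooth $\relu$, the saturating $\tanh,\mathrm{sigmoid},\mathrm{selu},\mathrm{celu}$, and the oscillatory $\sin$ on the same footing --- and, more delicately, bounding the second-order remainder of the mean-field map over the entire region the stationary chain explores, so that the linearization used above is legitimate there. If only a strictly local contraction can be established, then Assumption~\ref{ass:rapid_mixing} is genuinely doing work beyond convenience: it is precisely what carries the chain into the basin where the linearized analysis applies, which is why it is assumed rather than derived.
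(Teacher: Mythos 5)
Your reduction to $\norm{M_\ell}$ with $M_\ell=\C^{-1/2}G_\ell\C^{-1/2}-I_n$ via Courant--Fischer is correct and is exactly the right way to get all $n$ eigenvalue ratios simultaneously, and your transient term (maximal coupling to $\mu_*$ plus the deterministic a priori bound $\norm{G_\ell}\lesssim \distort^2 n$ coming from $\BN$ mapping rows to the $\sqrt n$-sphere) is sound and parallels what the paper does with total variation. The problem is the stationary estimate, where you and the paper genuinely diverge. Your argument hinges on two inputs that are neither hypotheses of the theorem nor established in the proposal: (i) a uniform spectral gap $\rho=\norm{D\Phi|_{I_n}}<1$ for the mean-field map, for every $\F\in\mathcal F$, and (ii) control of the second-order remainder of $\Phi$ over the whole region that the stationary law $\mu_*$ charges, so that the linearized, unrolled recursion is valid there. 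You flag both, but they are the load-bearing steps: without (i) the $\Phi(\tilde M_{k-1})-\Phi(I_n)$ term does not contract, and without (ii) the ``self-consistent'' unrolling is circular, since localizing $\tilde M_{k-L}$ near $I_n$ is precisely what is being proved. Assumption~\ref{ass:rapid_mixing} is a statement about total-variation contraction of the Markov kernel; it does not imply contraction of the deterministic map $\Phi$ near its fixed point, so it cannot be substituted for (i).

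The paper's proof is arranged specifically to avoid needing any property of $\Phi$ beyond the fixed-point identity $\Phi(\C)=\C$. It anchors everything at the explicit Gaussian candidate $\mu_C$ whose rows are i.i.d.\ $\Normal(0,\C)$: Lemma~\ref{lem:tv_one_step} shows $\|T(\mu_C)-\mu_C\|_{tv}\lesssim\eps^2$ (one application of matrix Bernstein on the $\sqrt n$-sphere, Lemma~\ref{lem:concentration}, combined with the Devroye et al.\ two-sided bound on the TV distance between Gaussians --- this optimization is where the $\ln\frac1\eps$ factor actually comes from); geometric ergodicity then forces $\tv{*}{C}\lesssim\eps^2/\alpha$ and hence $\tv{\ell}{C}\le e^{-\alpha\ell}+O(\eps^2)$ (Lemma~\ref{lem:tv_geometric_contraction}); finally, the concentration of the Gram matrix under $\mu_C$ transfers to $\mu_\ell$ at the cost of the TV distance, with the Gaussian TV lower bound (Lemma~\ref{lem:prob_based_tv}) converting back to a probability statement about $\norm{\C^{-1}G_\ell-I_n}$. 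In that architecture Assumption~\ref{ass:rapid_mixing} does all of the dynamical work, whereas in yours it only handles the transient and you still owe a separate contraction argument for the stationary fluctuations. To repair your proof you would either have to prove (i)--(ii) for all of $\mathcal F$ (nontrivial, especially for $\relu$ and $\sin$), or replace the unrolling step by the paper's TV-transfer from the explicit Gaussian candidate.
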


Let us review the implications of the Theorem~\ref{thm:main_ESD}. Note that this theorem captures the multiplicative deviations of $\lambda_i$'s from stable $\lambda_i^*$'s: with probability $1-t^{-2}$ we have a bound on $|\lambda_i/\lambda_i^*-1|,$ simultaneously for all $i.$  We can decompose deviations $\eps$ as (informally): 
\begin{align*}
\text{error} \approx \underbrace{\exp(-\nfrac{\alpha}{2}\cdot\text{depth})}_{\text{transitory}} + \alpha^{\nhalf}\underbrace{\frac{\text{batch-size}}{\text{width}^{1/2}}}_{\text{stationary}}.
\end{align*}
The first term $e^{-\alpha\text{depth}}$ implies the convergence to a stable state at rate $(1-\alpha)$, and the second part as the stationary term that becomes dominant after a logarithmic number of layers $\Omega( \log(\text{width}/\text{batch-size}))$. Thus, the established concentration only depends on the network width when the number layers are $\Omega(\log(\text{width}/\text{batch-size}))$. This is in stark contrast with the concentration bounds for neural networks without batch normalization that become vacuous with depth~\cite{hanin2019finite,hanin2022correlation}. While  activation shaping requires solving an SDE to track the dynamics of Gram matrix in neural networks without BN, BN-MLP only relies on the mean-field prediction $\C $ computed in closed-form by~\citet{yang2019mean} without shaping activations.

The reader may contrast the above result with~\eqref{eq:mean_field_discrepancy_informal}, where the mean field predictions suffer from an error growing at exponential rate with depth. The main difference between the two chains is the presence of normalization layers that stabilize the spectrum of Gram matrices.

 Depending on the activation function, one might not be able to analytically derive the constants in the expression for $\C$. Yet, we can show the concentration regardless of having access to $\C$: If we pass two inputs through the same MLP, then the spectra of their Gram matrices contract to a local neighbourhood of each-other.

\begin{corollary}\label{cor:ESD_coupling}
In the same setting as Theorem~\ref{thm:main_ESD}, let $\{h_\ell\}$ and $\{h'_\ell\}$ denoting representations of a random network matrices for two different inputs, with Gram matrices $\{G_\ell\}$ and $\{G'_\ell\}$ respectively, where both $G_0$ and $G'_0$ are non-degenerate, then
\begin{align}
\Prob{\bigwedge_{i=1}^n\big\lvert{\lambda_i}/{\lambda_i'}-1\big\lvert\gtrsim t(e^{-\frac{\alpha\ell}{2}}+\eps\ln\frac{1}{\eps}) } \ge 1-t^2, && \eps:=\frac{n\norm{\C^{-1/2}}}{\sqrt{d}}.
\end{align}
where $\lambda_i$'s and $\lambda'_i$'s are eigenvalues of $G_\ell$ and $G'_\ell$ respectively (descending).
\end{corollary}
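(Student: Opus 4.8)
The plan is to apply Theorem~\ref{thm:main_ESD} separately to the two chains $\{h_\ell\}$ and $\{h'_\ell\}$ and then to combine the two deviation estimates through their common reference point $\C$. The key observation is that the stable Gram matrix $\C$ of~\eqref{eq:MF_stable_C} and the quantity $\eps=n\norm{\C^{-1/2}}/\sqrt d$ do not depend on the input batch: under Assumption~\ref{ass:rapid_mixing} the chain of hidden representations has a \emph{unique} invariant distribution, so the spectra of $G_\ell$ and $G'_\ell$ are driven towards the same stationary behaviour regardless of the non-degenerate starting points $G_0$ and $G'_0$, and in particular the mean-field recurrence~\eqref{eq:MF_recurrence} converges to the same fixed point $\C$ for both inputs. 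Applying Theorem~\ref{thm:main_ESD} to each chain with parameter $\sqrt2\,t$ in place of $t$ and taking a union bound over the two failure events, we obtain that with probability at least $1-t^{-2}$ the simultaneous bounds
\begin{align*}
\Big\lvert\frac{\lambda_i}{\lambda_i^*}-1\Big\rvert\lesssim t\Big(e^{-\frac{\alpha\ell}{2}}+\eps\ln\frac1\eps\Big),\qquad
\Big\lvert\frac{\lambda_i'}{\lambda_i^*}-1\Big\rvert\lesssim t\Big(e^{-\frac{\alpha\ell}{2}}+\eps\ln\frac1\eps\Big)
\end{align*}
hold for all $i$, where $\lambda_i^*$ are the eigenvalues of the common $\C$ and $\lambda_i,\lambda_i'$ those of $G_\ell,G'_\ell$ (the factor $\sqrt2$ is absorbed into the implicit constant in $\lesssim$).

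On this event, for each $i$ one writes the multiplicative decomposition
\begin{align*}
\frac{\lambda_i}{\lambda_i'}-1=\Big(\frac{\lambda_i}{\lambda_i^*}-1\Big)\frac{\lambda_i^*}{\lambda_i'}+\Big(\frac{\lambda_i^*}{\lambda_i'}-1\Big)
\end{align*}
and bounds both summands using the two displays above: inverting the ratio $\lambda_i'/\lambda_i^*$ to control $\lambda_i^*/\lambda_i'$, and noting that in the regime where the statement is not vacuous the right-hand side $t(e^{-\alpha\ell/2}+\eps\ln\frac1\eps)$ may be taken at most $\tfrac12$, so that the reciprocal and the quadratically small cross term $(\lambda_i/\lambda_i^*-1)(\lambda_i^*/\lambda_i'-1)$ only change constants. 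This yields $\lvert\lambda_i/\lambda_i'-1\rvert\lesssim t(e^{-\frac{\alpha\ell}{2}}+\eps\ln\frac1\eps)$ simultaneously for all $i$, which is the assertion of Corollary~\ref{cor:ESD_coupling}.

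I do not expect a genuine obstacle: the corollary is essentially a triangle-inequality consequence of Theorem~\ref{thm:main_ESD}. The two points needing (minor) care are (i) tracking constants across the two applications and the union bound, which is harmless since the conclusion is phrased with $\lesssim/\gtrsim$, and (ii) justifying that the two chains share the same $\C$ — this is where non-degeneracy of \emph{both} $G_0$ and $G'_0$ enters, via the global attractivity towards the stable fixed point underlying Theorem~\ref{thm:main_ESD} (equivalently, uniqueness of the invariant distribution in Assumption~\ref{ass:rapid_mixing}). If one prefers not to invoke a common $\C$, an alternative is to couple the two chains directly: geometric ergodicity bounds the total-variation distance between the laws of $h_\ell$ and $h'_\ell$ by $(1-\alpha)^\ell$ times the initial distance, and this can be transferred to the Gram spectra by the same argument used in the proof of Theorem~\ref{thm:main_ESD}, giving the same rate without ever computing $\C$.
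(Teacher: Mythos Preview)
Your approach is correct and is exactly the derivation the paper has in mind: the corollary is stated without proof, as an immediate consequence of Theorem~\ref{thm:main_ESD} applied to both chains and combined through the common reference $\C$ via the identity $\lambda_i/\lambda_i'-1=(\lambda_i/\lambda_i^*-1)\,\lambda_i^*/\lambda_i'+(\lambda_i^*/\lambda_i'-1)$, just as you write. The remark the paper places right after the corollary---that $(\lambda_i/\lambda_i'-1)^2$ and $(\lambda_i'/\lambda_i-1)^2$ are within a factor $2$ of each other---is precisely the ratio-inversion step you use, and your observation that both chains share the same $\C$ (via the uniqueness in Assumption~\ref{ass:rapid_mixing}) is the only non-algebraic ingredient needed.
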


\begin{remark}
 We shall note the asymmetry of definition of $x=(\lambda_i/\lambda_i'-1)^2$ from $y=( \lambda_i'/\lambda_i-1)^2$, despite the fact that the definition of two chains is symmetric. However, $x$ and $y$ can only be apart by a constant $\nfrac{x}{2}\le y \le 2x,$ and thus are ``quasi-symmetric.''
 \end{remark}
\begin{remark}
One advantage of Corollary~\ref{cor:ESD_coupling} is that it is highly predictive without any prior knowledge of $\C.$ In other words, $\sum_i |\lambda_i/\lambda_i'-1|$ can serve as a numerical certificate for the stability of the spectrum. Similarly, corollary~\ref{cor:coupling} can be seen as a numerical certificate for moment stability in Theorem~\ref{thm:concentration}, when one does not have access to the structure of $\C.$
\end{remark}

\subsection{General activations}

Theorem~\ref{thm:main_ESD} extends to a broader family of activations with an adapted concentration bound.

\begin{theorem}[BN-MLP concentration]\label{thm:concentration}
Consider a Markov chain $\{h_\ell\}$ with Gram matrices $\{G_\ell\},$ and $\C$ is a stable mean field G-matrix. Define $\distort$ as the distortion of activation $\F$ induced on the sphere with $\sqrt{n}$-radius: 
\begin{align}\label{eq:distortion_def}
\distort:=\sup_{x\in\mathbb{S}} \frac{\norm{\F(x)}}{\norm{x}}, && \mathbb{S}:=\{x\in\R^n: \norm{x}=\sqrt{n}\}.
\end{align}
If chain $\{h_\ell\}$ obey Assumption~\ref{ass:rapid_mixing} with  $\alpha \in (0,1]$,  if input G-matrix $G_0$ and stable G-matrix $\C$ are non-degenerate, we have
\begin{align}\label{eq:concentration_bound}
\Prob{ \norm{\C^{-1}G_\ell - I_n}_F\ge t(e^{-\frac{\alpha\ell}{2}}+\eps\ln\nfrac{1}{\eps}) } \lesssim t^{-2}, && \eps:=\frac{n\gamma\norm{\C^{\nhalf}}}{\sqrt{d}}.
\end{align}
\end{theorem}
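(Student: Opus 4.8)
\emph{Proof strategy (sketch).} The plan is to bound the second moment $\E\big[\norm{\C^{-1}G_\ell - I_n}_F^{2}\big]$ by $\big(e^{-\alpha\ell/2}+\eps\ln\tfrac1\eps\big)^{2}$ up to an absolute constant, since the target tail is of Chebyshev type; applying Markov's inequality to $\norm{\C^{-1}G_\ell - I_n}_F^{2}$ then finishes. I would split this second moment into a transitory part governed by Assumption~\ref{ass:rapid_mixing} and a stationary part governed by a finite‑width concentration estimate. Two preliminary observations set this up. First, batch normalization places every row of $\BN(h)$ on the sphere $\mathbb{S}$ of radius $\sqrt n$, so by the definition of the distortion $\distort$ every row of $\F\circ\BN(h)$ has norm $\le\distort\sqrt n$; hence $\tr(G(h))\le\distort^{2}n$ and, as $G(h)\succeq 0$, $\norm{G(h)}\le\distort^{2}n$, giving the deterministic a priori bound $\norm{\C^{-1}G(h)-I_n}_F\le M:=\sqrt n\,(1+\distort^{2}n\norm{\C^{-1}})$. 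Second, letting $\mu_*$ be the unique invariant law (Assumption~\ref{ass:rapid_mixing}) and coupling $h_\ell$ maximally with a draw $h_\ell^\star\sim\mu_*$, one has $\Prob{h_\ell\ne h_\ell^\star}=\tv{\ell}{*}\le(1-\alpha)^\ell$, and on $\{h_\ell=h_\ell^\star\}$ also $G_\ell=G(h_\ell^\star)$; therefore
\begin{align*}
\E\big[\norm{\C^{-1}G_\ell - I_n}_F^{2}\big] \;\le\; \E\big[\norm{\C^{-1}G(h_\ell^\star) - I_n}_F^{2}\big] \;+\; M^{2}(1-\alpha)^{\ell},
\end{align*}
the last summand being the transitory term (it becomes $e^{-\alpha\ell/2}$ after a square root).

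For the stationary part I would first exploit the structure of $\mu_*$. The Markov kernel maps any $h$ to $W\,\F\circ\BN(h)$, whose $d$ rows are, conditionally on $h$, i.i.d.\ $\Normal(0,G(h))$; hence $\mu_*$ itself has i.i.d.\ rows, and under $h\sim\mu_*$ we may write $G(h)=\tfrac1d\sum_{r=1}^{d}v_rv_r^{\top}$ with $v_r:=\F\circ\BN(\row_r(h))$ i.i.d.\ and $\norm{v_r}\le\distort\sqrt n$. Set $\overline{G}_*:=\E_{\mu_*}[G(h)]=\E[v_1v_1^{\top}]$. Using invariance once more, $\E[G(h')\mid h]=\Psi(G(h))$ for the mean‑field map $\Psi(G):=\E_{w\sim\Normal(0,G)}[(\F\circ\BN(w))^{\otimes 2}]$, so $\overline{G}_*=\E_{\mu_*}[\Psi(G(h))]$; a second‑order Taylor expansion of $\Psi$ about $\overline{G}_*$ (valid since $\C$, hence $\overline{G}_*$, is non‑degenerate), whose first‑order term vanishes because the expansion is centred at the mean, yields $\overline{G}_*=\Psi(\overline{G}_*)+O(d^{-1})$, the $O(d^{-1})$ being controlled by the variance of $G(h)$ under $\mu_*$. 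Since $\C$ is a \emph{stable} fixed point of $\Psi$, an approximate fixed point this close to $\C$ must satisfy $\norm{\overline{G}_*-\C}\lesssim d^{-1}$, which is of lower order than $\eps$; I may henceforth replace $\overline{G}_*$ by $\C$ at the cost of such a term.

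It then remains to bound $\E\big[\norm{\C^{-1}(G(h)-\overline{G}_*)}_F^{2}\big]$ under $h\sim\mu_*$. Writing $G(h)-\overline{G}_*=\tfrac1d\sum_r(v_rv_r^{\top}-\overline{G}_*)$ and using that the summands are i.i.d.\ with mean zero, the cross terms drop and
\begin{align*}
\E\big[\norm{\C^{-1}(G(h)-\overline{G}_*)}_F^{2}\big]
&= \tfrac1d\,\E\big[\tr\big((v_1v_1^{\top}-\overline{G}_*)\,\C^{-2}\,(v_1v_1^{\top}-\overline{G}_*)\big)\big] \\
&\lesssim \tfrac{\distort^{2}n}{d}\,\tr(\C^{-2}\overline{G}_*) \;\lesssim\; \tfrac{\distort^{2}n^{2}\norm{\C^{-1}}}{d} \;=\; \eps^{2},
\end{align*}
using $\norm{v_1}^{2}\le\distort^{2}n$ and $\tr(\C^{-2}\overline{G}_*)\approx\tr(\C^{-1})\le n\norm{\C^{-1}}$. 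The residual logarithmic factor $\ln\tfrac1\eps$ in the statement is then incurred when the transitory and stationary estimates are combined and the truncation/perturbation parameters are optimized — equivalently, when only the last $O(\alpha^{-1}\log\tfrac1\eps)$ layers are allowed to contribute to the accumulated per‑layer noise — and is a lower‑order refinement. Combining the two parts should give $\E\big[\norm{\C^{-1}G_\ell - I_n}_F^{2}\big]\lesssim e^{-\alpha\ell}+(\eps\ln\tfrac1\eps)^{2}$, whence Chebyshev's inequality yields the theorem.

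The step I expect to be the main obstacle is the identification $\overline{G}_*\approx\C$: this is exactly where a genuinely finite‑width object — the invariant row‑law is a Gaussian \emph{mixture}, because the conditional covariance $G(h)$ fluctuates at scale $d^{-1/2}$ — must be matched to the mean‑field fixed point, a single Gaussian, and doing so quantitatively needs both the uniqueness clause of Assumption~\ref{ass:rapid_mixing} and a stability estimate for $\C$ under $\Psi$. A secondary, mostly cosmetic, nuisance is that the a priori constant $M$ multiplies $(1-\alpha)^\ell$, so the transitory term in fact carries a hidden $n,\distort,\norm{\C^{-1}}$‑dependent factor; this is harmless in the regime $\ell\gtrsim\alpha^{-1}\log(\cdot)$ where the bound is informative.
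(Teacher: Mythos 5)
Your high-level decomposition (transitory term from geometric ergodicity plus a stationary term of size $\eps\ln\frac1\eps$, then Chebyshev) matches the shape of the result, and the transitory half — maximal coupling with $\mu_*$, the a priori bound $M$ from batch normalization, $\Prob{h_\ell\ne h_\ell^\star}\le(1-\alpha)^\ell$ — is sound. The stationary half, however, has two genuine gaps. First, the claim that $\mu_*$ has i.i.d.\ rows is false: the rows of $W\,\F\circ\BN(h)$ are i.i.d.\ only \emph{conditionally} on $h$; unconditionally they are an exchangeable Gaussian mixture sharing the random covariance $G(h)$ (you yourself note this at the end). Consequently the cross terms in $\E\norm{\C^{-1}(G(h)-\overline{G}_*)}_F^2$ do not vanish — they contribute roughly $\Var_{\mu_*}\bigl(\Psi(G(h))\bigr)$, and bounding that by $O(1/d)$ requires a quantitative contraction property of the mean-field map $\Psi$ near $\C$, which is neither assumed in the theorem nor derived. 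Second, the identification $\overline{G}_*\approx\C$ via a second-order Taylor expansion and "stability" of the fixed point likewise uses smoothness and local attractivity of $\Psi$ that are not among the hypotheses (Assumption~\ref{ass:rapid_mixing} is a TV-contraction of the \emph{kernel}, not of $\Psi$). So as written, the stationary estimate does not close.

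The paper circumvents both issues by never computing moments of $\mu_*$ at all. It takes the explicit Gaussian reference law $\mu_0$ with rows i.i.d.\ $\Normal(0,\C)$ — for which the i.i.d.\ row structure and the fixed-point identity $\C=\Expec{(\F\circ\BN(w))^{\otimes2}}$ hold \emph{exactly} — and shows via matrix Bernstein plus the two-sided Gaussian TV bound of Devroye et al.\ that $\norm{T(\mu_0)-\mu_0}_{tv}\lesssim\eps^2\ln\frac1\eps$ (Lemma~\ref{lem:tv_one_step}; this optimization is where the $\ln\frac1\eps$ actually comes from). Geometric ergodicity then pins the unknown $\mu_*$ inside a TV-ball of radius $\eps^2/\alpha$ around $\mu_0$, and the lower bound in the Gaussian TV inequality converts TV-closeness to $\mu_0$ back into the spectral concentration of $G_\ell$ around $\C$. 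If you want to rescue your moment-based route, you would need to either (i) replace the i.i.d.\ claim by the conditional decomposition and supply a contraction estimate for $\Psi$ to control the cross terms and $\norm{\overline{G}_*-\C}$, or (ii) import the paper's TV localization of $\mu_*$ and use boundedness of $G(h)$ (which you already established) to transfer it to second moments.
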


\begin{remark}
Note that for all $\F\in\mathcal{F},$ there is absolute constant $C, $ such that $|\F(x)|\le C |x|.$ This implies that all common activations $\F\in\mathcal{F}$ have constant distortion $\distort = O(1).$
\end{remark}

Theorem~\ref{thm:concentration} implies the following contractive property. 
\begin{corollary}\label{cor:coupling}
In same setting as Theorem~\ref{thm:concentration}, two gram matrices $\{G_\ell'\}$ and $\{G_\ell\}$ of hidden representations for two different inputs (at the same neural network), if $G_0$ and $G'_0$ are non-degenerate, it holds
\begin{align}
\Prob{\norm{G_\ell^{-1} G'_\ell-I_n}_F \ge t\eps } \lesssim t^{-2},&& \eps:=\frac{n\gamma\norm{\C^{\nhalf}}}{\sqrt{d}}.
\end{align}
where $\eps$ is defined in Theorem~\ref{thm:concentration}.
\end{corollary}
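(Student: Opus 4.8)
The plan is to obtain Corollary~\ref{cor:coupling} from two applications of Theorem~\ref{thm:concentration} together with an elementary matrix‑perturbation step. First I would invoke Theorem~\ref{thm:concentration} for the chain $\{h_\ell\}$ with Gram matrices $\{G_\ell\}$ and, separately, for the chain $\{h'_\ell\}$ with Gram matrices $\{G'_\ell\}$: both inputs are admissible since $G_0$ and $G'_0$ are non-degenerate, $\C$ is non-degenerate, and Assumption~\ref{ass:rapid_mixing} holds for every $h_0$. Note that the two chains share the weights $W_\ell$, but this coupling is not actually needed for the present bound; a union bound over the two (not necessarily independent) events already gives, with probability at least $1-O(t^{-2})$,
\[
\norm{\C^{-1}G_\ell - I_n}_F \le \delta_\ell,\qquad \norm{\C^{-1}G'_\ell - I_n}_F \le \delta_\ell,\qquad \delta_\ell:=t\bigl(e^{-\alpha\ell/2}+\eps\ln(1/\eps)\bigr).
\]

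On this event I would bound $\norm{G_\ell^{-1}G'_\ell-I_n}_F$ directly. Writing $G_\ell^{-1}G'_\ell-I_n=G_\ell^{-1}(G'_\ell-G_\ell)$ and $G'_\ell-G_\ell=\C\bigl((\C^{-1}G'_\ell-I_n)-(\C^{-1}G_\ell-I_n)\bigr)$, submultiplicativity and the triangle inequality give $\norm{G'_\ell-G_\ell}_F\le 2\norm{\C}\,\delta_\ell$. For the inverse factor, $\norm{\C^{-1}G_\ell-I_n}\le\norm{\C^{-1}G_\ell-I_n}_F\le\delta_\ell<1$ in the meaningful regime of (large) $d$ and of $t$, so $\C^{-1}G_\ell$ is invertible with $\norm{(\C^{-1}G_\ell)^{-1}}\le(1-\delta_\ell)^{-1}$ by a Neumann series, whence $\norm{G_\ell^{-1}}=\norm{(\C^{-1}G_\ell)^{-1}\C^{-1}}\le\norm{\C^{-1}}/(1-\delta_\ell)$. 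Combining, $\norm{G_\ell^{-1}G'_\ell-I_n}_F\le \tfrac{2\kappa(\C)}{1-\delta_\ell}\,\delta_\ell\lesssim\kappa(\C)\,\delta_\ell$, which is the claimed form: the condition number $\kappa(\C)$, the logarithmic factor $\ln(1/\eps)$, and the transitory term $e^{-\alpha\ell/2}$ are absorbed into $\lesssim$ exactly as in Theorem~\ref{thm:concentration} and Corollary~\ref{cor:ESD_coupling}; in particular, once $\ell=\Omega(\log(1/\eps)/\alpha)$ the right-hand side is of order $\eps$ up to those constants.

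The only step that is not pure bookkeeping is the control of $\norm{G_\ell^{-1}}$, i.e. keeping $\sigma_{\min}(G_\ell)$ bounded away from $0$ on the good event. This is precisely what Theorem~\ref{thm:concentration} supplies: the Frobenius bound $\norm{\C^{-1}G_\ell-I_n}_F<1$ forces $\C^{-1}G_\ell$, and hence $G_\ell$, to be nonsingular with a quantitative lower bound on its smallest singular value, so no separate spectral argument is required. What remains is to choose the parameter fed into Theorem~\ref{thm:concentration} so that, after the two losses of $\norm{\C}$ and $\norm{\C^{-1}}$, the final deviation lands on the advertised scale, and to record the range of $t$ and $d$ in which $\delta_\ell<1$ so that the Neumann-series inversion is legitimate.
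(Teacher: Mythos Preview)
Your proposal is correct and matches the paper's intended route: the paper gives no separate argument for Corollary~\ref{cor:coupling}, stating only that ``Theorem~\ref{thm:concentration} implies the following contractive property,'' and your two invocations of Theorem~\ref{thm:concentration} followed by the Neumann-series perturbation step are precisely the natural way to fill that in. Your caveat about the transitory term and the logarithmic factor is also apt: what you actually derive is a bound of order $t\bigl(e^{-\alpha\ell/2}+\eps\ln(1/\eps)\bigr)$, which matches the analogous Corollary~\ref{cor:ESD_coupling} and is presumably what the paper intends, the bare $t\eps$ in the statement holding only in the regime $\ell=\Omega(\alpha^{-1}\log(1/\eps))$ you identify.
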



\paragraph{Proof sketch of Theorem~\ref{thm:concentration}.}

We first construct an approximate invariant distribution, associated with $T$ defined in \eqref{eq:dist_recurrence}. To construct such distribution, we leverage the mean-field Gram matrix to construct input $h_0\in\R^{d\times n},$ with rows drawn i.i.d. from
$\row_r(h_0)\sim \Normal(0,\C)$. The next lemma proves the law of $h_0$ denoted by $\mu_0$ does not change much under $T$. 

\begin{lemma}\label{lem:tv_one_step}
With the $\distort$ defined in~\eqref{eq:distortion_def}, we have
\begin{align}
\| T(\mu_0)- \mu_0 \|_{tv}\lesssim \frac{n^2\gamma^2\norm{\C^{-1}}}{d}=:\eps^2.
\end{align}
\end{lemma}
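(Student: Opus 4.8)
The plan is to bound $\|T(\mu_0) - \mu_0\|_{tv}$ by coupling: sample $h_0$ with i.i.d.\ rows $\row_r(h_0) \sim \Normal(0,\C)$, push it through one layer of the chain \eqref{eq:chain} to get $h_1 = W_0\,\F\circ\BN(h_0)$, and then exhibit a coupling of $h_1$ with a fresh draw $h_0'$ from $\mu_0$ such that $\P\{h_1 \neq h_0'\}$ (or a suitable transport cost) is $\O(\eps^2)$; since $\|T(\mu_0)-\mu_0\|_{tv} \le \P\{h_1 \neq h_0'\}$ for any such coupling, this gives the claim. The key observation is that conditionally on $h_0$, the matrix $h_1 = W_0\,\F\circ\BN(h_0)$ has i.i.d.\ rows, each distributed as $\Normal(0, G_0)$ where $G_0 = \frac1d (\F\circ\BN(h_0))^\top(\F\circ\BN(h_0))$ is the (random) input Gram matrix. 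Thus $T(\mu_0)$ is exactly a Gaussian-rows law with random covariance $G_0$, and $\mu_0$ is the Gaussian-rows law with covariance $\C$. So the whole problem reduces to (i) controlling how close $G_0$ is to $\C$ in an appropriate norm, and (ii) converting that covariance closeness into a total-variation bound between the two product-Gaussian laws on $\R^{d\times n}$.

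For step (i), I would compute $\E[G_0] = \E[(\F\circ\BN(w))^{\otimes 2}]$ for $w\sim\Normal(0,\C)$, which by the fixed-point equation \eqref{eq:MF_stable_C} equals $\C$ exactly — so $G_0$ is an unbiased estimator of $\C$ formed as an average of $d$ i.i.d.\ rank-one terms $\frac1d\sum_r v_r v_r^\top$ with $v_r = \F\circ\BN(\row_r(h_0)^\top)$. Each $v_r$ lives on the sphere of radius $\le \distort\sqrt n$ after normalization (this is exactly what the distortion parameter $\distort$ in \eqref{eq:distortion_def} controls, since $\BN$ maps each row onto $\mathbb{S}$), so $\|v_r v_r^\top\| \le \distort^2 n$ deterministically. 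A matrix Bernstein / matrix Chebyshev bound then gives $\E\|G_0 - \C\|^2 \lesssim \distort^2 n \cdot \|\C\| \cdot \tfrac1d$ or, in the dimension-free form matching the lemma, $\E\|\C^{-1/2}(G_0-\C)\C^{-1/2}\|_F^2 \lesssim \tfrac{n^2\distort^2\|\C^{-1}\|}{d}$, after accounting for the $n$ eigenvalues and the preconditioning by $\C^{-1/2}$.

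For step (ii), I would use a standard bound for the total variation distance between two centered product-Gaussian laws on $\R^{d\times n}$ with covariances $G_0$ and $\C$ (equivalently, between $\Normal(0,G_0)^{\otimes d}$ and $\Normal(0,\C)^{\otimes d}$): by Pinsker plus tensorization of KL, $\|\cdot\|_{tv}^2 \lesssim d\cdot \KL(\Normal(0,G_0)\,\|\,\Normal(0,\C)) \lesssim d\cdot \|\C^{-1/2}G_0\C^{-1/2} - I_n\|_F^2$, valid when $G_0$ is close enough to $\C$ that the log-det expansion is controlled (which holds on the high-probability event from step (i) given $d \gtrsim n^2\|\C^{-1}\|$; off that event the contribution is negligible). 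Combining with the bound $\E\|\C^{-1/2}G_0\C^{-1/2}-I_n\|_F^2 \lesssim \tfrac{n^2\distort^2\|\C^{-1}\|}{d^2}$ — note the extra $1/d$ relative to step (i) because $G_0$ itself is a $d$-term average — the factor $d$ cancels and we land at $\|T(\mu_0)-\mu_0\|_{tv} \lesssim \tfrac{n^2\distort^2\|\C^{-1}\|}{d} = \eps^2$, as claimed.

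The main obstacle I anticipate is step (ii): the Pinsker/tensorization route needs the log-determinant term $\tr(\C^{-1}G_0) - \ln\det(\C^{-1}G_0) - n$ to be genuinely second-order in $\|\C^{-1/2}G_0\C^{-1/2}-I_n\|$, which requires a quantitative control on the smallest eigenvalue of $\C^{-1/2}G_0\C^{-1/2}$ staying bounded away from $0$ — i.e.\ a truncation argument separating the typical event (where the quadratic expansion is valid and gives the $\eps^2$ scale) from the rare event (handled crudely, using that total variation is at most $1$ and that $d\gtrsim n^2\|\C^{-1}\|$ makes the tail probability smaller than $\eps^2$). Getting the dimension-free constants to line up so that the final bound is $\eps^2$ rather than, say, $\eps^2\ln(1/\eps)$ or $n\eps^2$, is the delicate bookkeeping; I would organize it by working throughout with the preconditioned matrix $M := \C^{-1/2}G_0\C^{-1/2}$, showing $\E\|M - I_n\|_F^2 \lesssim \eps^2/d$ directly via the rank-one average structure, and then invoking the Gaussian-KL bound in preconditioned form.
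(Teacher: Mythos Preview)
Your overall strategy matches the paper's: condition on the random Gram matrix $G_0$, use the fixed-point identity \eqref{eq:MF_stable_C} to see that $\E[G_0]=\C$ exactly, and then convert the closeness $G_0\approx\C$ into a total-variation bound between the two Gaussian-row laws. Step (i) is correct and yields $\E\|\C^{-1/2}G_0\C^{-1/2}-I_n\|_F^2 \lesssim n^2\gamma^2\|\C^{-1}\|/d = \eps^2$. The gap is in step (ii): the ``extra $1/d$'' you invoke does not exist. The $1/d$ in step (i) \emph{is} already the one coming from $G_0$ being a $d$-term average; there is nothing further to gain. Concretely, with $y_r=\C^{-1/2}\F\circ\BN(\row_r(h_0))$ one has $\E\bigl\|\tfrac1d\sum_r y_ry_r^\top - I_n\bigr\|_F^2 = \tfrac1d\bigl(\E\|y_1\|^4-n\bigr) \le \tfrac{n^2\gamma^2\|\C^{-1}\|}{d}$, not $\tfrac{n^2\gamma^2\|\C^{-1}\|}{d^2}$. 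Plugging the correct scaling into Pinsker with KL-tensorization gives $\TV^2\lesssim d\cdot\eps^2 = n^2\gamma^2\|\C^{-1}\|$, which is $O(1)$ and vacuous: the factor $d$ from tensorizing over $d$ independent rows exactly cancels the $1/d$ from the empirical-average variance, so this route cannot reach $\eps^2$.

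The paper does not go through Pinsker/KL at all. It truncates on the event $N_t=\{\|\C^{-1}G_0-I_n\|_F^2\le t\}$, bounds $\P\{G_0\notin N_t\}$ by the matrix Bernstein inequality (an exponential-in-$d$ tail, roughly $n\exp(-td/(2\gamma^2\|\C^{-1}\|n^2))$), bounds the conditional TV on $N_t$ \emph{directly} via the Devroye--Mehrabian--Reddad estimate for centered Gaussians (obtaining $\lesssim t$), and then optimizes the sum over $t$. The optimization---not a second-moment computation---is what produces the $\eps^2$ scale (with a logarithmic factor that is absorbed in $\lesssim$): one takes $t$ of order $\tfrac{n^2\gamma^2\|\C^{-1}\|}{d}\ln\tfrac{d}{n\gamma^2\|\C^{-1}\|}$, at which point the Bernstein tail is already $o(\eps^2)$. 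If you want to salvage your argument, drop Pinsker, use the direct Gaussian TV bound on the good event, and carry out the truncation-plus-optimization explicitly rather than trying to pass through $\E\|M-I\|_F^2$.
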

The proof of the last lemma is based on the fixed-point property of $\C$. Using the last lemma together with
Assumption~\ref{ass:rapid_mixing}, we prove that $\mu_0$ is in a $tv$-ball around the invariant distribution $\mu_*$. Under this assumption, we have 
\begin{equation}
\begin{aligned}
    \| T(\mu_0)-T(\mu_*)\|_{tv} &= \|T(\mu_0)-\mu_*\|_{tv} \\
    & \leq (1-\alpha) \|\mu_0- \mu_*\|_{tv}.
\end{aligned}
\end{equation}
where we used the invariant property of $\mu_*$ in the above equation. 
Using triangular inequality, we get 
\begin{equation}
\begin{aligned}
    \| T(\mu_0) - \mu_0\|_{tv} & = \| T(\mu_0) -\mu_* +\mu_*- \mu_0\|_{tv} \\ 
    & \geq \| \mu_*- \mu_0\|_{tv} -\|T(\mu_0) - \mu_*\|_{tv} \\ 
    & \geq \alpha \| \mu_{0}-\mu_* \|
\end{aligned}
\end{equation}
Plugging the bound from the last lemma into the above inequality concludes $\mu_0$ lies in within a tv-ball around $\mu_*$ with radius $\eps^2/\alpha$. This concludes the proof: Since the chain is geometric ergodic, the distribution $\mu_\ell$ converges to $\mu_*$ at an exponential rate concentrated around $\mu_0$.

\section{Consequences}\label{sec:validations}

\subsection{Well-conditioned stable Gram and its implications}
Thus far, we have spoken of the existence, but not the structure of the mean-field Gram matrix $\C$. \citet{yang2019mean} find a mean field approximation by scaling width to infinity and solving~$\C$ solution for ~\eqref{eq:MF_recurrence}.
They introduce two types of fixed points, stable (called BSB1) and unstable (referred as BSB2). $G_*$ is the stable fixed-points.   Leveraging symmetry structure of neural nets, \citet{yang2019mean} show that $\C$ has the following form:
\begin{align}\label{eq:stable_G_symmetric}
   \C = b^*( (1-c^*)I_n + c^* \1_{n\times n}). 
\end{align}

In the special case of common activations, $\F\in\mathcal{F},$ ~\citet{yang2019mean} show that stable Gram matrices are well-conditioned $\norm{\C^{-1}}=O(1),$ and the following spectral concentration hold:
\begin{corollary}[Spectral concentration, well conditioned $\C$]\label{cor:ESD}
In same setting as Theorem~\ref{thm:main_ESD}, for $\eps:=n/\sqrt{d},$ and all $t\ge 0,$ if stable G-matrix is well-conditioned $\norm{\C^{-1}}=O(1),$ and $\eps< 1,$ we have
\begin{align}
    \P\left\{\sqrt{\textstyle\sum_i^n (\lambda_i-\lambda_i^*})^2\gtrsim t(e^{-\alpha\ell/2}+\eps\ln\frac{1}{\eps})\right\}\le t^2.
\end{align}
\end{corollary}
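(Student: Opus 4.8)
}
The plan is to get Corollary~\ref{cor:ESD} out of Theorem~\ref{thm:main_ESD} in two short steps, feeding in the structure of $\C$ that \citet{yang2019mean} determine for $\F\in\mathcal F$; I would not touch the distributional machinery. The first step calibrates $\eps$: since $\C=b^*\big((1-c^*)I_n+c^*\1_{n\times n}\big)$ with $\norm{\C^{-1}}=O(1)$, its smallest eigenvalue $\lambda_n^*=b^*(1-c^*)=\norm{\C^{-1}}^{-1}$ lies between two absolute constants (it is $\Omega(1)$ by hypothesis and at most $\tfrac1n\tr\C=b^*=O(1)$), so $\norm{\C^{-1/2}}=\Theta(1)$ and the $\eps=n\norm{\C^{-1/2}}/\sqrt d$ appearing in Theorem~\ref{thm:main_ESD} equals the $\eps=n/\sqrt d$ used here up to an absolute constant; since $x\mapsto x\ln(1/x)$ is bounded on $(0,1)$, the hypothesis $\eps<1$ lets me absorb this rescaling into the constant hidden in the $\gtrsim$.

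The second step passes from relative to additive deviations. Writing $\lambda_i-\lambda_i^*=\lambda_i^*(\lambda_i/\lambda_i^*-1)$ and using $\lambda_i^*\le\lambda_1^*=\norm{\C}$,
\begin{align*}
\sqrt{\textstyle\sum_{i=1}^n(\lambda_i-\lambda_i^*)^2}
&=\sqrt{\textstyle\sum_{i=1}^n(\lambda_i^*)^2(\lambda_i/\lambda_i^*-1)^2}\\
&\le\norm{\C}\,\sqrt{\textstyle\sum_{i=1}^n(\lambda_i/\lambda_i^*-1)^2},
\end{align*}
and Theorem~\ref{thm:main_ESD} bounds the last square root by $t\big(e^{-\alpha\ell/2}+\eps\ln\tfrac1\eps\big)$ on its high-probability event. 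If $\norm{\C}=O(1)$ this already closes the proof, and this is the case for the odd members of $\mathcal F$: there the $-1/(n-1)$ post-$\BN$ correlation and the vanishing zeroth Hermite coefficient force $c^*=\Theta(1/n)$, whence $\norm{\C}=b^*(1+(n-1)c^*)=\Theta(1)$.

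The main obstacle is the all-ones direction for the non-odd members of $\mathcal F$ (such as $\relu,\mathrm{selu},\mathrm{celu},\mathrm{sigmoid}$), where $c^*=\Theta(1)$ so that $\C$ has a single large eigenvalue $\lambda_1^*=b^*(1+(n-1)c^*)=\Theta(n)$ along $\1/\sqrt n$ and all other eigenvalues are $\Theta(1)$: applying the uniform relative bound of Theorem~\ref{thm:main_ESD} to this one eigenvalue wastes a factor $n$, and the estimate can survive only because $|\lambda_1/\lambda_1^*-1|$ is itself a factor $\gtrsim\sqrt n$ below the generic $\eps$. I would close this by treating $\lambda_1$ separately. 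Since $\1/\sqrt n$ is $\C$'s top eigenvector with $\Theta(n)$ spectral gap, Davis--Kahan (using the standing $d\gtrsim n^2\norm{\C^{-1}}$ to make the misalignment angle $o(1)$ and its contribution to $\lambda_1$ of order $\eps$) reduces $\lambda_1$ to the scalar $\tfrac1n\1^\top G_\ell\1=\tfrac1{nd}\norm{(\F\circ\BN(h_\ell))\,\1}^2$, which is an average of $d$ conditionally i.i.d.\ nonnegative scalars that concentrates at the $d^{-1/2}$ rate around a conditional mean that the one-step estimate of Lemma~\ref{lem:tv_one_step} and the exponential mixing of Assumption~\ref{ass:rapid_mixing} place within $O(e^{-\alpha\ell/2}+\eps)$ of $\lambda_1^*$; this yields $|\lambda_1-\lambda_1^*|\lesssim t(e^{-\alpha\ell/2}+\eps\ln\tfrac1\eps)$, which added to the bulk contribution gives the claimed bound on the same event. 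The delicate point inside this route is a sharp $O(\eps)$ control of $G_\ell-\C$ on $\1^\perp$ and across the two spectral blocks, which is finer than what Theorem~\ref{thm:concentration} gives directly but which I expect to fall out of the same second-moment computations behind it; alternatively, and this is what I would try first, the $\norm{\C^{-1/2}}$-weighting already built into Theorem~\ref{thm:main_ESD} is precisely what suppresses the large-$\lambda^*$ direction, so the required refined control on $\lambda_1$ ought to be recoverable from that theorem's proof with essentially no extra work.
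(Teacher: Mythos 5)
Your first two steps (calibrating $\eps$ via $\norm{\C^{-1/2}}=O(1)$ and converting relative to additive deviations through $\lambda_i-\lambda_i^*=\lambda_i^*(\lambda_i/\lambda_i^*-1)$) are exactly the derivation the paper intends — it states Corollary~\ref{cor:ESD} with no separate proof, as an immediate consequence of Theorem~\ref{thm:main_ESD} once $\norm{\C^{-1}}=O(1)$. Your diagnosis of the obstruction is also correct and is genuinely more careful than the paper: for the non-odd members of $\mathcal{F}$ the structure~\eqref{eq:stable_G_symmetric} gives $\lambda_1^*=b^*(1+(n-1)c^*)=\Theta(n)$, so the conversion factor $\norm{\C}$ (or $\norm{\C}_F$) is \emph{not} an absolute constant, and the corollary as literally stated does not follow from the theorem by the naive argument. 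The paper glosses over this entirely, so your separate treatment of the top eigenvalue is a different (and, if the statement is to hold with absolute constants, necessary) route rather than a reproduction of the paper's.

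Two caveats on your own argument. First, even in the ``easy'' odd case your chain is not tight as written: Theorem~\ref{thm:main_ESD} is a \emph{uniform} per-eigenvalue bound ($\bigwedge_i$), so it controls $\max_i\lvert\lambda_i/\lambda_i^*-1\rvert$, not the $\ell^2$ aggregate; passing to $\sqrt{\sum_i(\lambda_i/\lambda_i^*-1)^2}$ costs a further $\sqrt{n}$, i.e.\ the honest bound is $\norm{\C}_F\max_i\lvert\lambda_i/\lambda_i^*-1\rvert$ with $\norm{\C}_F=\Theta(\sqrt{n})$ even when $\norm{\C}=O(1)$. Second, in the Davis--Kahan repair the second-order correction to $\lambda_1$ is of order $\norm{G_\ell-\C}^2/\mathrm{gap}\approx (n\eps)^2/n=n^3/d$, which is $O(\eps)=O(n/\sqrt{d})$ only when $d\gtrsim n^4$ — strictly stronger than the standing $d\gtrsim n^2\norm{\C^{-1}}$ — so the ``misalignment contributes $O(\eps)$'' step needs either the finer block-wise control you flag at the end or an explicit strengthening of the width assumption. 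As a proof \emph{plan} this is sound and more honest than the source, but the $\lambda_1$ branch is not yet a proof.
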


Similarly, if $\C$ is well-conditioned, Theorem~\ref{thm:concentration} implies the following concentration of G-matrices around stable G-matrix $\C$:
\begin{corollary}\label{cor:concentration_diff}
In the same setting as Theorem~\ref{thm:concentration}, for $\eps:=n/\sqrt{d},$ and $t\ge 0,$ if stable G-matrix obeys $\norm{\C^{-1}}=O(1),$ and $\eps<1,$ we have
\begin{align}
    \P\left\{\norm{G_\ell-\C}_F\gtrsim t(e^{-\alpha\ell/2}+\eps\ln\frac{1}{\eps})\right\} \lesssim t^2.
\end{align}
\end{corollary}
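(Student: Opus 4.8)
}
The plan is to obtain this as a direct consequence of Theorem~\ref{thm:concentration} by a change of norm. The elementary identity $G_\ell-\C=\C\,(\C^{-1}G_\ell-I_n)$, together with submultiplicativity of the Frobenius norm under the operator norm, gives
\begin{align*}
\norm{G_\ell-\C}_F \;\le\; \norm{\C}\,\norm{\C^{-1}G_\ell-I_n}_F ,
\end{align*}
so it is enough to bound the right-hand factor by Theorem~\ref{thm:concentration} and to control the prefactor $\norm{\C}$ together with the two notions of $\eps$.

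Under the corollary's hypotheses this is pure bookkeeping. Since $\F\in\mathcal F$, the remark following Theorem~\ref{thm:concentration} gives $\distort=O(1)$; since $\C$ is symmetric positive definite, $\norm{\C^{\nhalf}}=\norm{\C^{-1}}^{1/2}=O(1)$ by assumption; and by the explicit BSB1 form~\eqref{eq:stable_G_symmetric} of~\citet{yang2019mean} the operator norm $\norm{\C}$ is likewise $O(1)$. Hence the quantity $\eps=n\distort\norm{\C^{\nhalf}}/\sqrt d$ appearing in~\eqref{eq:concentration_bound} agrees, up to absolute constants, with the corollary's $\eps=n/\sqrt d$, and the factor $\norm{\C}$ in the display above is absorbed into the $\lesssim$. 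Feeding Theorem~\ref{thm:concentration} into the displayed inequality then yields, with the $\lesssim t^{-2}$ failure probability inherited from Theorem~\ref{thm:concentration},
\begin{align*}
\norm{G_\ell-\C}_F \;\lesssim\; t\big(e^{-\alpha\ell/2}+\eps\ln\tfrac1\eps\big),
\end{align*}
which is the claimed bound.

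The only point that needs a small argument, rather than being immediate, is the replacement of $\eps_{\mathrm{Thm}}=n\distort\norm{\C^{\nhalf}}/\sqrt d$ by $\eps=n/\sqrt d$ inside the term $\eps\ln(1/\eps)$: because $x\mapsto x\ln(1/x)$ is not monotone on $(0,1)$, one uses the hypothesis $\eps<1$ (so that $\eps$ stays bounded away from $1$ in the regime of interest) to ensure that rescaling $\eps$ by a constant changes $\eps\ln(1/\eps)$ by at most a constant factor. This is the sole subtlety; there is no genuine obstacle, since the statement is just a specialization of Theorem~\ref{thm:concentration} to a well-conditioned $\C$ together with a passage from the relative deviation $\norm{\C^{-1}G_\ell-I_n}_F$ to the absolute deviation $\norm{G_\ell-\C}_F$.
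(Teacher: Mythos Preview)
Your derivation is correct and is precisely the argument the paper has in mind: the corollary is stated without proof as an immediate consequence of Theorem~\ref{thm:concentration}, obtained by writing $G_\ell-\C=\C(\C^{-1}G_\ell-I_n)$, applying $\norm{AB}_F\le\norm{A}\,\norm{B}_F$, and absorbing $\gamma$, $\norm{\C^{\nhalf}}$, and $\norm{\C}$ into the $\lesssim$ once $\norm{\C^{-1}}=O(1)$ is assumed. Your bookkeeping on the two definitions of $\eps$ and the behaviour of $x\mapsto x\ln(1/x)$ is the right level of care.

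One small caveat worth flagging: your appeal to the BSB1 form~\eqref{eq:stable_G_symmetric} to conclude $\norm{\C}=O(1)$ is not automatic for every $\F\in\mathcal F$. The top eigenvalue of $b^*\bigl((1-c^*)I_n+c^*\1_{n\times n}\bigr)$ is $b^*(1+c^*(n-1))$, and the paper itself notes just after this corollary that for $\relu$-type activations $\lambda_1^*\asymp c^*n$. So for non-odd activations the prefactor $\norm{\C}$ in your bound may carry an $n$-dependent factor rather than an absolute constant. This imprecision is already present in the paper's own statement (which hypothesises only $\norm{\C^{-1}}=O(1)$), so it is not a defect of your argument relative to the paper, but you should be aware that the step ``$\norm{\C}=O(1)$ by~\eqref{eq:stable_G_symmetric}'' is where any looseness lives.
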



Given this particular structure of $\C$ in~\eqref{eq:stable_G_symmetric}, we can conclude $\lambda_i$'s have a very particular structure, with all $\lambda^*_2=\dots=\lambda^*_n = b^*(1-c^*),$ and the largest eigenvalue is $\lambda^*_1=c^* n$.  This implies a more explicit characterization of the spectra of G-matrices in deep neural networks, stated in the next corollary.

\begin{corollary}\label{cor:ESD_spect}
    Suppose that $\C$ in Theorem~\ref{thm:main_ESD} is the stable fixed-point of the mean-field regime and is well-conditioned $\norm{\C^{-1}}=O(1)$; then for sufficiently deep layers ${\ell\gtrsim\log(d/\alpha),}$ $(n-1)$ eigenvalues of $G_\ell$ lie between $c(1-\O(\sqrt{n/d}))$ and $c(1+\O(\sqrt{n/d})),$ with high probability for a constant $c$ that only depends on $ \F\in\mathcal{F}$.
\end{corollary}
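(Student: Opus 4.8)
}
The plan is to read off the eigenstructure of the stable fixed point from the symmetric form~\eqref{eq:stable_G_symmetric} and then feed it into the spectral concentration of Corollary~\ref{cor:ESD}. From~\eqref{eq:stable_G_symmetric}, $\C = b^*\big((1-c^*)I_n + c^*\1_{n\times n}\big)$ where $b^*,c^*$ are scalars depending only on $\F\in\mathcal{F}$ (computed in closed form by~\citet{yang2019mean}). Since $\1_{n\times n}$ has eigenvalue $n$ on $\1$ and $0$ on $\1^{\perp}$, the matrix $\C$ has a simple eigenvalue $\lambda_1^* = b^*\big(1+c^*(n-1)\big)$ and an $(n-1)$-fold eigenvalue $c:=\lambda_2^*=\dots=\lambda_n^* = b^*(1-c^*)$. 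For a BSB1 (stable) fixed point one has $b^*>0$ and $c^*\in(0,1)$, so $c$ is in fact the \emph{smallest} eigenvalue of $\C$; consequently the hypothesis $\norm{\C^{-1}}=\O(1)$ is equivalent to $c = 1/\norm{\C^{-1}} = \Omega(1)$, with $c$ a positive constant determined by $\F$.

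Next I would invoke Corollary~\ref{cor:ESD} in the well-conditioned regime: with high probability the (descending) eigenvalues $\lambda_i$ of $G_\ell$ satisfy $\sqrt{\sum_{i=1}^n(\lambda_i-\lambda_i^*)^2}\lesssim e^{-\alpha\ell/2}+\eps\ln\tfrac1\eps$ with $\eps=n/\sqrt d$, hence in particular $|\lambda_i-\lambda_i^*|\lesssim e^{-\alpha\ell/2}+\eps\ln\tfrac1\eps$ for every $i$. When $\ell$ is deep enough that $e^{-\alpha\ell/2}\le \eps\ln\tfrac1\eps$ — which holds once $\ell\gtrsim\tfrac1\alpha\log\tfrac{1}{\eps\ln(1/\eps)}$, and a fortiori for $\ell\gtrsim\log(d/\alpha)$ — the transient term is dominated and we get $|\lambda_i-\lambda_i^*|\lesssim \eps\ln\tfrac1\eps$ for all $i$, with high probability.

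It remains to specialize to $i=2,\dots,n$, where $\lambda_i^*=c$: dividing the last bound by $c=\Omega(1)$ converts the additive estimate into the relative one $\big|\lambda_i/c-1\big|\lesssim \eps\ln\tfrac1\eps = \O(\sqrt{n/d})$ (up to the logarithmic factor $\ln\tfrac1\eps$), i.e. $\lambda_i\in\big[c(1-\O(\sqrt{n/d})),\,c(1+\O(\sqrt{n/d}))\big]$ for each $i\in\{2,\dots,n\}$. This localizes $(n-1)$ eigenvalues of $G_\ell$ as claimed, with $c$ depending only on $\F$.

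Everything here is short once Corollary~\ref{cor:ESD} is granted; the one point that needs care — and the closest thing to an obstacle — is step one, namely identifying $c=b^*(1-c^*)$ as the $(n-1)$-fold and smallest eigenvalue of $\C$ from~\eqref{eq:stable_G_symmetric}, so that the well-conditioning hypothesis lets one turn the absolute perturbation bound of Corollary~\ref{cor:ESD} into a genuine $(1\pm o(1))$ multiplicative statement; one must also check that the threshold $\ell\gtrsim\log(d/\alpha)$ really does make $e^{-\alpha\ell/2}$ negligible next to the stationary term $\eps\ln(1/\eps)$.
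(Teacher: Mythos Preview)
Your argument is correct and mirrors the paper's own (implicit) derivation: the paper does not give a standalone proof of this corollary but obtains it exactly as you do, by reading off the $(n-1)$-fold eigenvalue $c=b^*(1-c^*)$ of $\C$ from~\eqref{eq:stable_G_symmetric} and then plugging into the spectral concentration of Corollary~\ref{cor:ESD}. Your caveats are also the right ones: the stated threshold $\ell\gtrsim\log(d/\alpha)$ only matches the needed $\ell\gtrsim\alpha^{-1}\log(1/\eps)$ when $\alpha$ is treated as a constant, and the bound you actually obtain is $O(\eps\ln\tfrac1\eps)=O((n/\sqrt d)\ln(\sqrt d/n))$, which equals the paper's $O(\sqrt{n/d})$ only up to a $\sqrt n$ and a log factor (the paper's Marchenko--Pastur remark after the corollary is heuristic context, not a sharper proof).
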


\begin{figure}
\subfloat[relu]{\includegraphics[width = 3in]{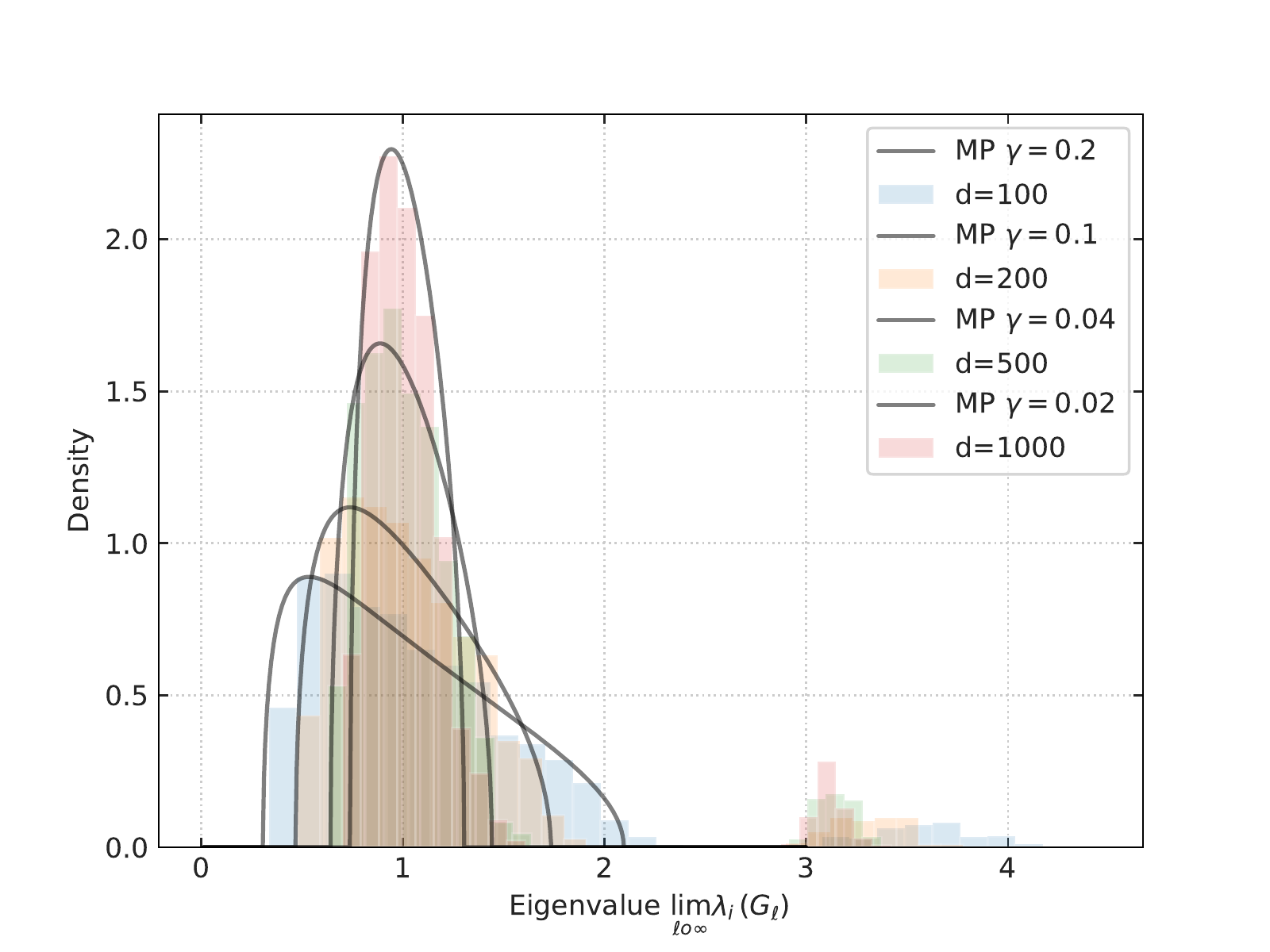}} 
\subfloat[selu]{\includegraphics[width = 3in]{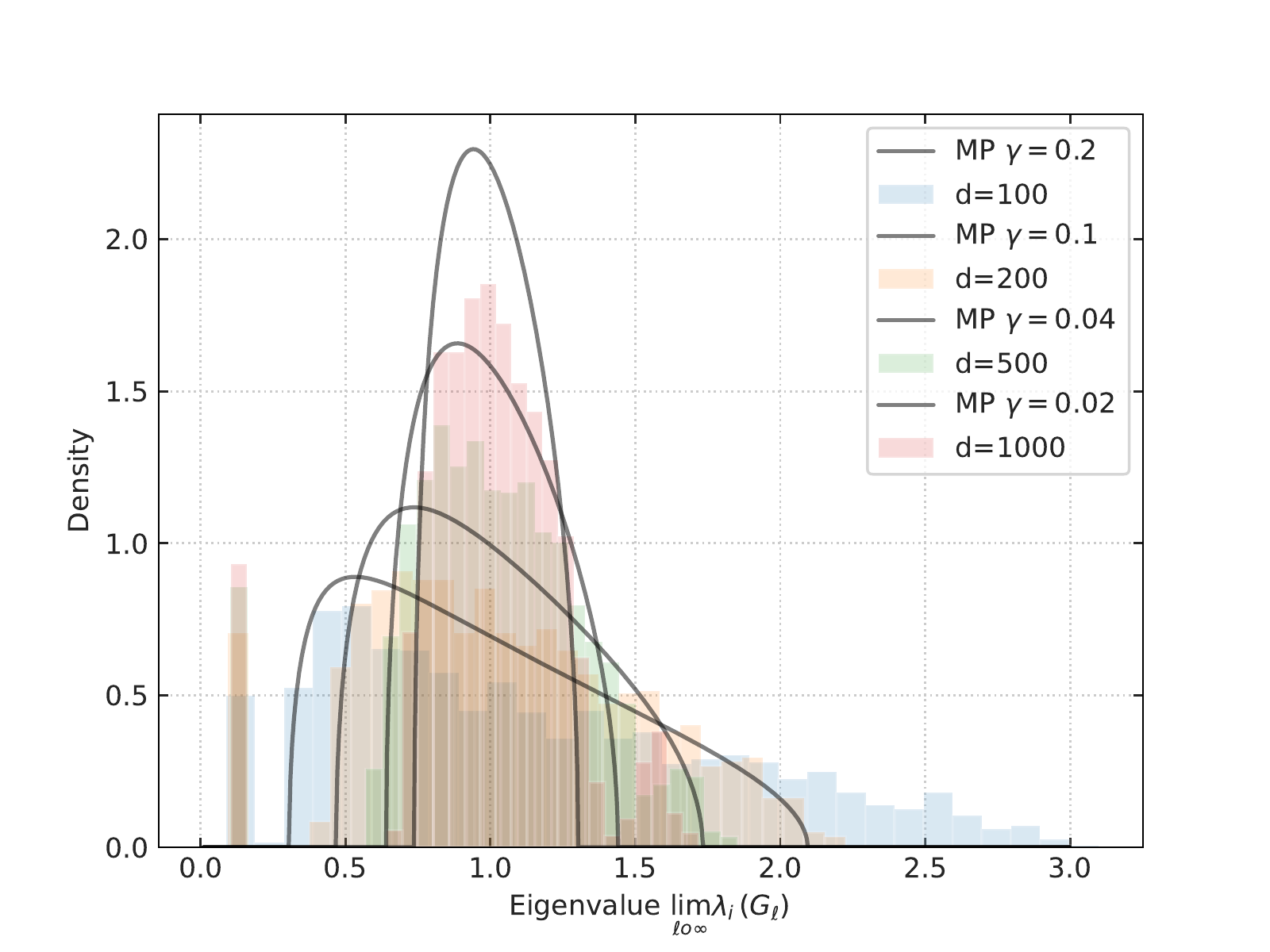}}\\
\subfloat[celu]{\includegraphics[width = 3in]{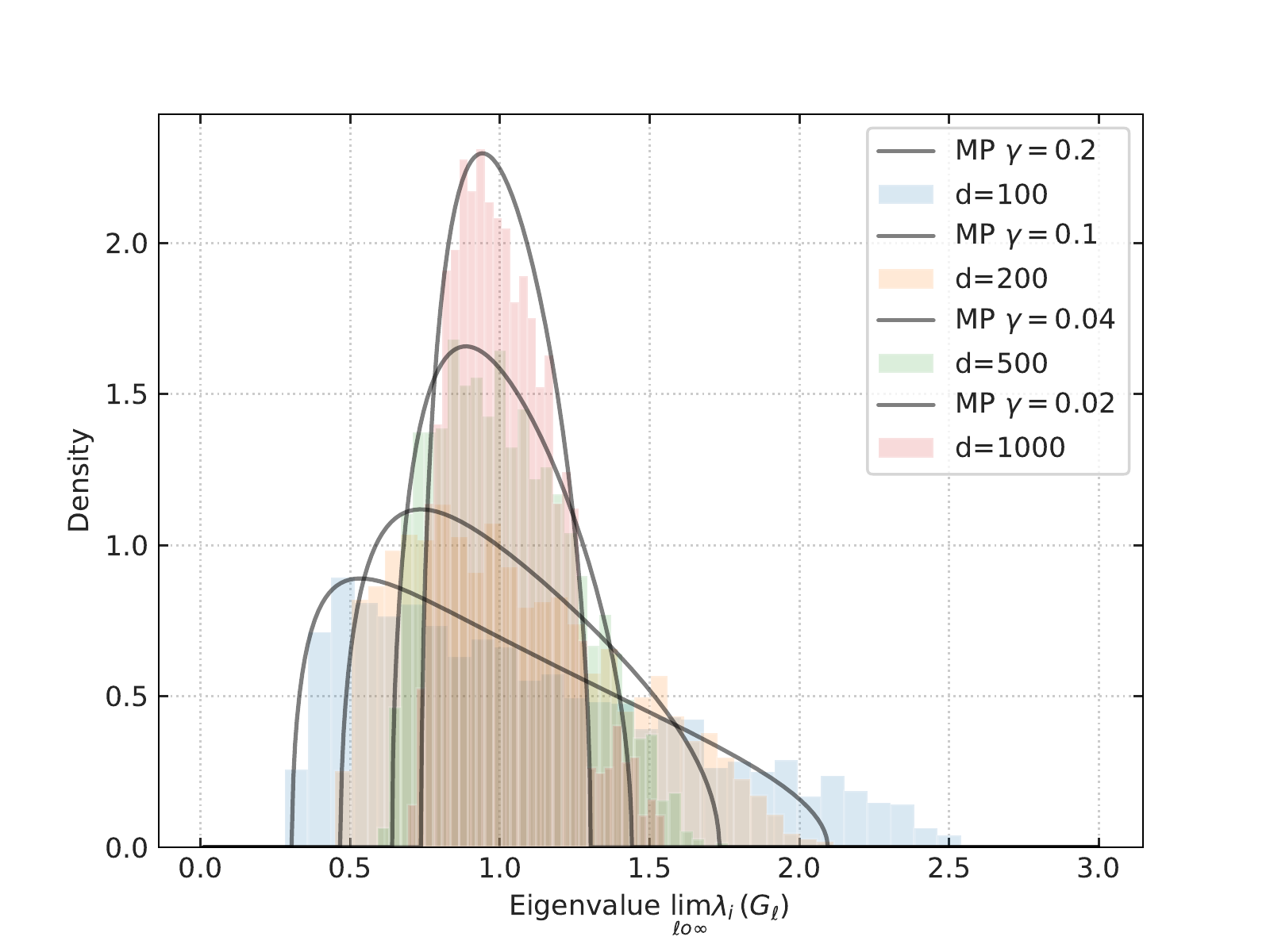}}
\subfloat[tanh]{\includegraphics[width = 3in]{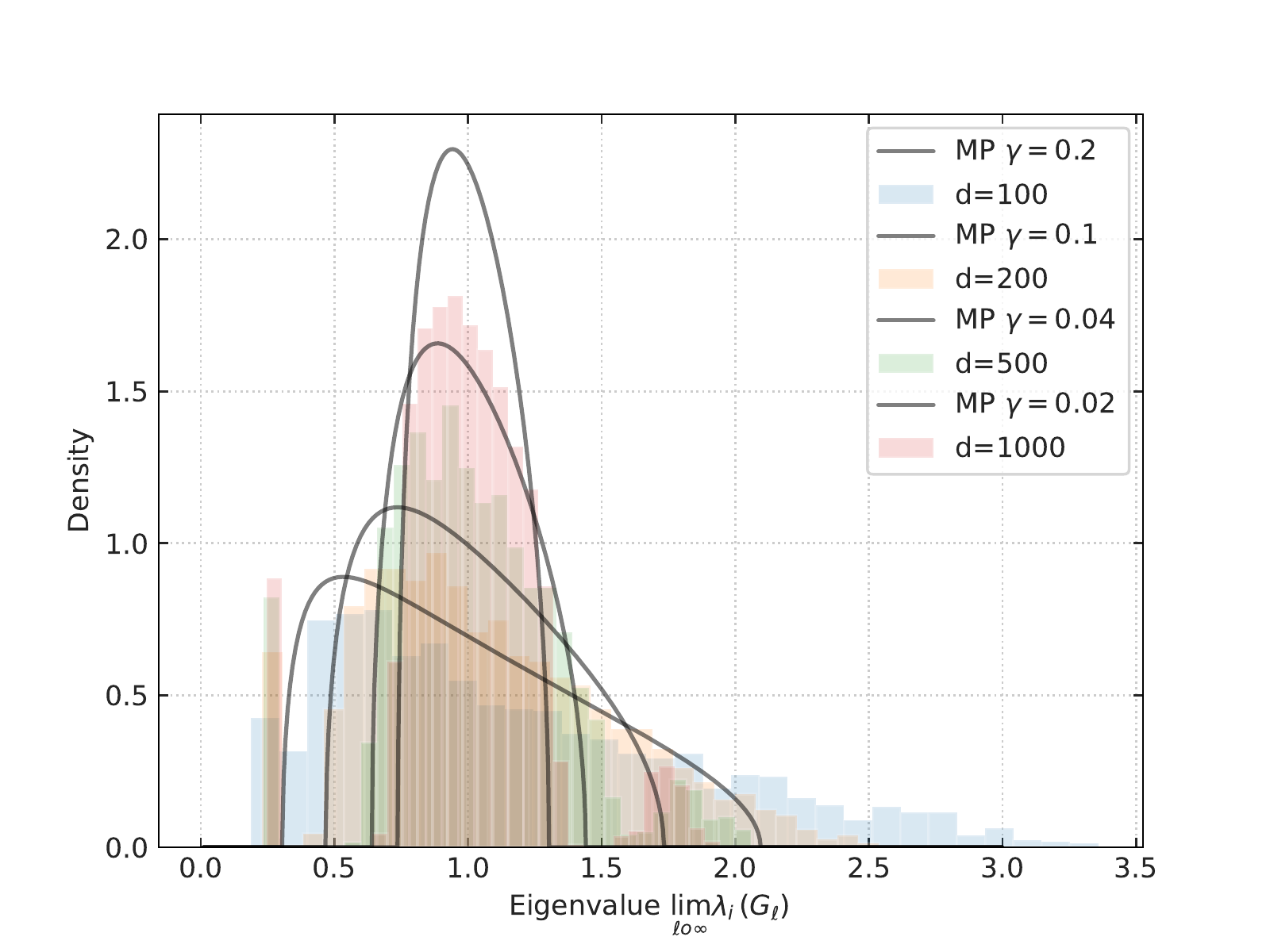}} \\
\subfloat[sin]{\includegraphics[width = 3in]{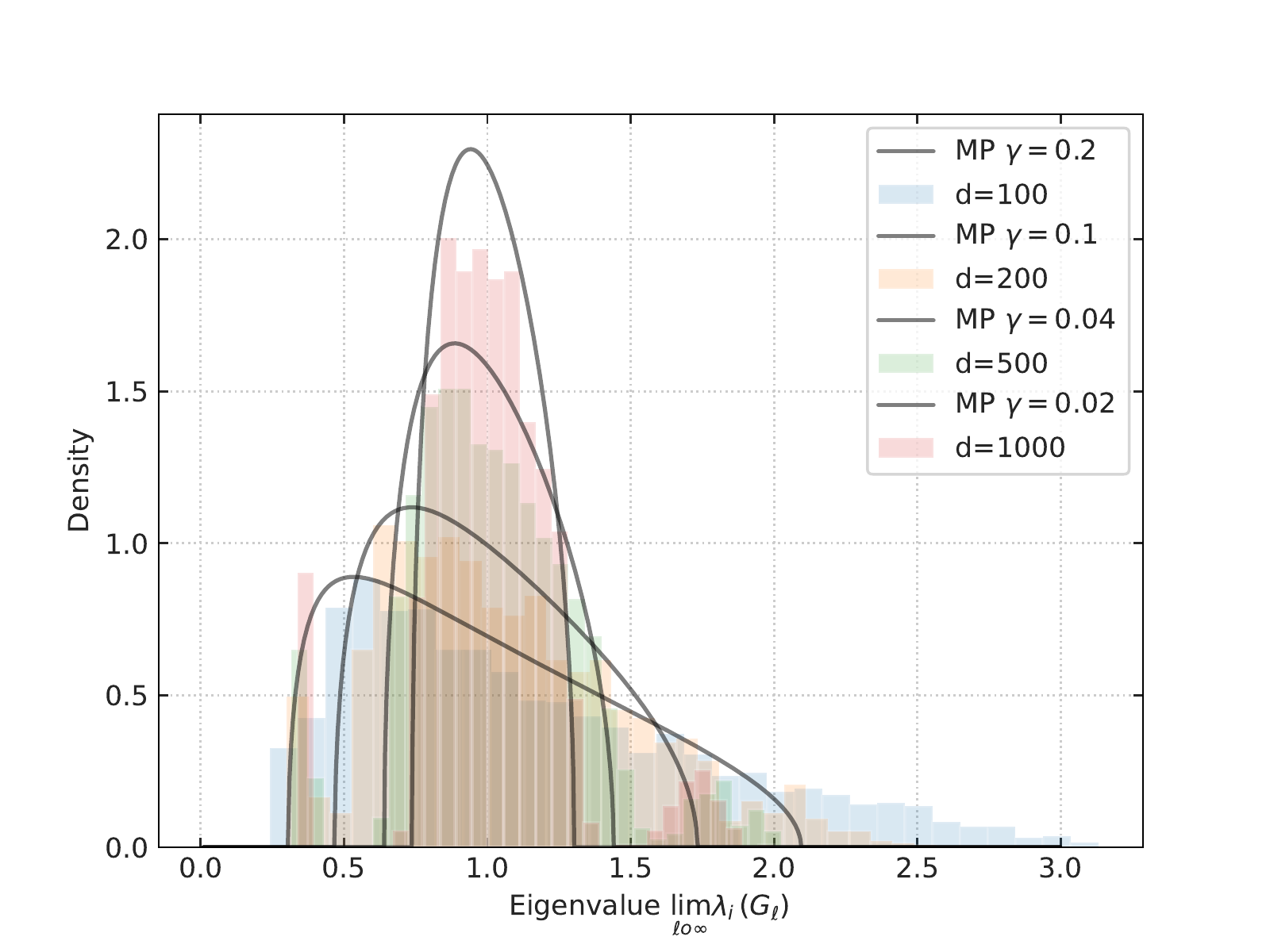}} 
\subfloat[sigmoid]{\includegraphics[width = 3in]{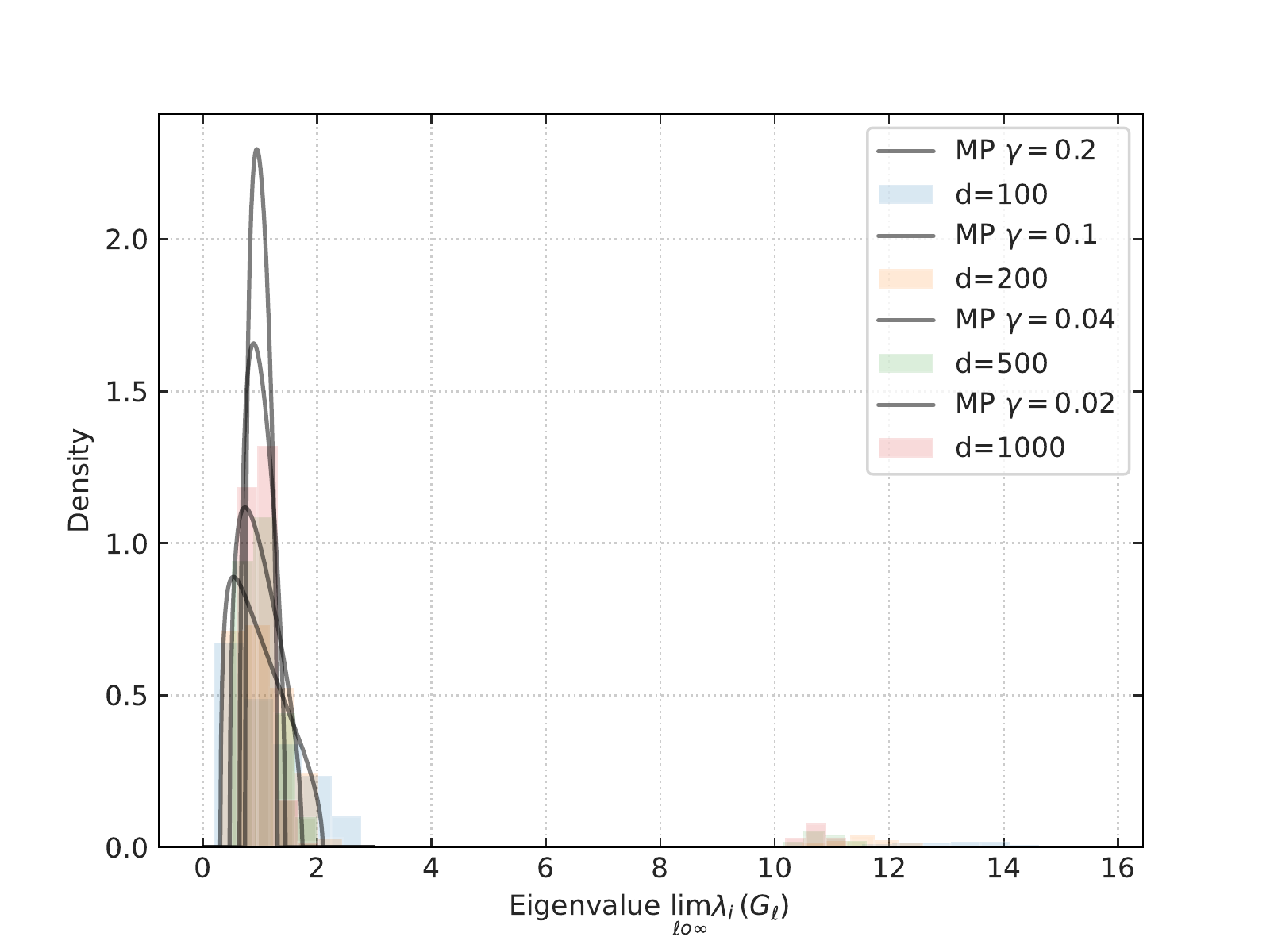}} \\
 \caption{BN-MLP with $n=20,d=500$, $\ell=10$: histogram shows the empirical distribution of $\lambda_i(G_\ell),$ for $\F=\relu$ and $\F=\{\tanh,\sin,\relu,\mathrm{sigmoid},\mathrm{selu}\}$ The black curve marks the Marchenko-Pastur distribution with $\gamma={n/d}.$ The eigenvalues are normalized by their medians in this plot. }
 \label{fig:MP_ESD}
\end{figure}


The previous corollary characterizes the ``bulk'' of eigenvalues of~$G_\ell$: since the Marchenko-Pastur distribution has support at $[1-\sqrt{\gamma},1+\sqrt{\gamma}],$ which implies the eigenalues are supported in the range $[1-\sqrt{n/d},1+\sqrt{n/d}]$ (up to some scaling).  This is a direct consequence of the Marchenko-Pastur law about distribution of eigenvalues of Wishart matrices~\cite{bai1998no,tao2012topics}. In fact, the eigenvalue distribution of $G_\ell$, for sufficiently deep layers, accurately follows Marchenko-Pastur distribution with $\gamma={n/d}.$ Figure~\ref{fig:MP_ESD} shows that bound is remarkably accurate for the spectra. 

We shall also remark that only one eigenvalue is most affected by the activation, while the other $n-1$ behave qualitatively similar. Namely, we can see that $\F=\relu$ induces a single large eigenvalue that is larger that corresponds to the $\1_n$ direction, due to the fact that all outputs of $\relu$ are positive.
We repeat the experiment ($n=20,\ell=10$) in Figure~\ref{fig:MP_ESD} for various activations and various widths $d=100,200,500,1000$, and observe similar results. The results validate tighter concentration bounds with $d$  established in the main Theorem.   


\section{Proof of main theorem}\label{app:main_theorem_proof}

\subsection*{Main Tools for Concentration: Total Variation of Gaussians and Matrix Bernstein Inequality}

In this section, we discuss the main tools used for our concentration analysis. In particular, we focus on the use of total variation and the matrix Bernstein inequality.

First, we consider total variation, which is a measure of the difference in probability between the laws of two random variables. For multivariate Gaussian distributions, total variation can be controlled by simple upper and lower bounds, as stated in Theorem 1 of ~\cite{devroye2018total}:
\begin{align}\label{eq:gassian_tv_bound}
\frac{1}{100} \le \frac{\tv{X}{Y}}{\sum_i \lambda_i^2} \le \frac{3}{2}, && X\sim\Normal(C_1),Y\sim\Normal(C_2)
\end{align}
where $\lambda_i$s are eigenvalues of $C_1^{-1}C_2-I_n.$

The second tool that we use is the matrix Bernstein inequality, as stated in Theorem 1.4 of ~\cite{tropp2012user}:
\begin{align}\label{eq:matrix_bernstein}\tag{Matrix Bernstein}
\Prob{\norm{\sum_k X_k}\ge t} \le n e^{-\frac{\nfrac{t}{2}}{\sigma^2 + \nfrac{Rt}{3}}},
\end{align}
where $\{X_k\}_{k\in\N},$ are iid self-adjoint matrices, centered $\E X_k=0$ and have bounded second moments $\sigma^2=\norm{\E X_k X_k^\top}$, with universal operator bound $\lambda_1(X_k)\le R$.

With these tools, we are able to bridge the gap between mean field analysis and finite width neural networks with batch normalization, which will be discussed in the following sections.

\subsection*{Linear Algebra Recap} 
Given a matrix $M\in \R^{n\times m}$, its reduced singular value decomposition (SVD) refers to orthogonal matrices $U\in\R^{m\times r}$, $V\in\R^{n\times r}$ such that $U^\top U = V^\top V = I_r$ and a diagonal matrix $S\in\R^{r\times r}$, such that $M = U S V^\top$, with $U$ and $V$ as eigenvectors, and $\diag(S)=(s_1,\dots,s_n)$ denoting the singular values $s_1\ge \dots \ge s_n\ge 0$. For symmetric matrices $M$, this simplifies to eigenvalue decomposition $M = U \Lambda U^\top$, where $U$ are eigenvectors, respectively, and $\diag(\Lambda)$ contains the singular values $\lambda_1\ge \dots\ge \lambda_n$. If all eigenvalues of $M$ are positive (non-negative), it is positive definite (semi-definite), denoted by $M\succ 0$ ($M \succeq 0$). For positive definite $M$, its powers are defined as $M^\alpha = U S^\alpha U^\top$. We use the notation $\norm{C}$ to denote the $L^2$ operator norm of any operator $T\colon\R^n\to\R^n$: $\norm{T}=\sup_{x\in\R^n}\norm{T(x)}/\norm{x}$. In the special case of $T(x)=M x$ for a positive definite $M\in\R^{n\times n}$, this is equal to its largest singular value $s_1(M)$. We use $\kappa(C)$ to denote condition number $\kappa(C)=\norm{C}\norm{C^{-1}},$ defined for non-singular matrices. 

\subsection{Concentration on the $\sqrt{n}$-sphere}
We provide a lemma that characterizes the deviation between the Gram  and the second moment matrices, and demonstrate how this deviation is controlled by the distortion of the network: 
Recall the definition of distortion $\distort = \sup_{x\in\mathbb{S}} \norm{\F(x)}/\norm{x},$ where $\mathbb{S}=\{x\in\R^n:\norm{x}=\sqrt{n}\}.$

\begin{lemma}\label{lem:concentration}
Let $\mu$ be a probability mass in $\R^n$ within the $\sqrt{n}$-sphere: $\Prob{\norm{z}\le \sqrt{n}}=1$, and for activation $\F:\R\to\R$, $C:=\E_\mu \F(z) \F(z)^\top,$ is positive definite $C\succ 0$, and $\F(z)$ is defined point-wise $\F(z)$ with bounded distortion $\distort$.
Given iid samples $z_1,\dots, z_d\sim \mu$, define Gram matrix $ \Csa:=\frac1d \sum_{k=1}^d z_k z_k^\top$. For any constant $t\ge 0$ it holds:
\begin{align}
\Prob{\norm{C^{-1} \Csa - I_n}_F^2\ge t } \le n e^{- \frac{ \nfrac{t d}{2} }{\distort^2 \norm{C^{-1}} n^2 (1+ \sqrt{t/n})}}.
\end{align}
\end{lemma}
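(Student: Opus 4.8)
}
Write $v_k:=\F(z_k)$, so that $\Csa=\frac1d\sum_{k=1}^d v_kv_k^\top$ with $\E[v_kv_k^\top]=C$, and, since $\norm{z_k}\le\sqrt n$ a.s.\ and $\F$ has distortion $\distort$, also $\norm{v_k}\le\distort\sqrt n$ a.s. Then $C^{-1}\Csa-I_n=\frac1d\sum_{k=1}^d Z_k$ with $Z_k:=C^{-1}v_kv_k^\top-I_n$ i.i.d.\ and $\E Z_k=0$. The matrix $C^{-1}\Csa-I_n$ is not self-adjoint, so I pass to the vectorisation: $\norm{C^{-1}\Csa-I_n}_F=\frac1d\norm{\sum_k\mathrm{vec}(Z_k)}_2$, a sum of i.i.d.\ centred random vectors in $\R^{n^2}$, which I control by applying the matrix Bernstein inequality~\eqref{eq:matrix_bernstein} to the Hermitian dilations of the $\mathrm{vec}(Z_k)$ (equivalently, by the vector Bernstein inequality).

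Bernstein needs an almost-sure bound and a variance proxy. For the former, using $\norm{C^{-1}v_k}\le\norm{C^{-1}}\,\norm{v_k}$ and that $\mathrm{vec}$ is an isometry,
\[
\norm{Z_k}_F\ \le\ \norm{C^{-1}v_k}\,\norm{v_k}+\sqrt n\ \le\ \distort^2 n\,\norm{C^{-1}}+\sqrt n\ =:\ R .
\]
For the latter the relevant quantity is $\sigma^2:=\sum_k\E\norm{Z_k}_F^2=d\,\E\norm{Z_1}_F^2$ (this also dominates $\norm{\sum_k\E\,\mathrm{vec}(Z_k)\mathrm{vec}(Z_k)^\top}$). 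Expanding the square and using $\E\tr[C^{-1}v_1v_1^\top]=\tr[C^{-1}C]=n$ and $\norm{v_1}^2\le\distort^2 n$,
\[
\E\norm{Z_1}_F^2\ =\ \E\big[(v_1^\top C^{-2}v_1)\,\norm{v_1}^2\big]-n\ \le\ \distort^2 n\,\E[v_1^\top C^{-2}v_1]\ =\ \distort^2 n\,\tr[C^{-1}]\ \le\ \distort^2 n^2\,\norm{C^{-1}} ,
\]
so $\sigma^2\le d\distort^2 n^2\norm{C^{-1}}$; the essential step is $\E[v_1v_1^\top]=C$, which collapses $\E[v_1^\top C^{-2}v_1]$ into $\tr[C^{-2}C]=\tr[C^{-1}]\le n\norm{C^{-1}}$.

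Feeding $R$ and $\sigma^2$ into Bernstein at level $S=d\sqrt t$, and noting $\{\norm{C^{-1}\Csa-I_n}_F^2\ge t\}=\{\norm{\sum_k\mathrm{vec}(Z_k)}_2\ge d\sqrt t\}$,
\[
\Prob{\norm{C^{-1}\Csa-I_n}_F^2\ge t}\ \le\ \bar n\,\exp\!\Big(-\frac{d^2 t/2}{d\distort^2 n^2\norm{C^{-1}}+R\,d\sqrt t/3}\Big)\ \le\ \bar n\,\exp\!\Big(-\frac{dt/2}{\distort^2 n^2\norm{C^{-1}}\big(1+\sqrt{t/n}\big)}\Big),
\]
where the last step uses $R\sqrt t/(3\distort^2 n^2\norm{C^{-1}})\le \tfrac13\sqrt{t/n}$, valid because $\distort^2 n\norm{C^{-1}}\ge\norm{C}\norm{C^{-1}}\ge1$. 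The prefactor $\bar n$ is $n^2+1$ for the plain dilation, and is reduced to $n$ — indeed to a universal constant — by the intrinsic-dimension form of matrix Bernstein, since the dilation's variance matrix has intrinsic dimension $\tr/\norm{\cdot}=2$; this recovers exactly the stated bound.

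The one genuinely delicate estimate is the variance: the crude sub-multiplicative bound only gives $\E\norm{Z_1}_F^2\lesssim\distort^4 n^2\norm{C^{-1}}^2$, i.e.\ an extra factor $\distort^2 n\norm{C^{-1}}\ge\kappa(C)$, which would insert a condition-number factor into the exponent. Eliminating it forces the vectorised viewpoint — where the variance is a \emph{scalar} trace — so that $\E[v_1v_1^\top]=C$ can be used to reduce one quadratic form to $\tr[C^{-1}]$; this is why the final bound depends on $C$ only through $\norm{C^{-1}}$ and on $\F$ only through $\distort$. The non-self-adjointness of $C^{-1}\Csa-I_n$ (handled by the dilation) and the dimensional prefactor are the only other points needing care.
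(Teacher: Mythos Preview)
Your proof is correct and takes a genuinely different route from the paper's. The paper first symmetrises, replacing $C^{-1}\Csa-I_n$ by the self-adjoint $C^{-1/2}\Csa C^{-1/2}-I_n$ (which has the same eigenvalues), and applies matrix Bernstein directly to the self-adjoint summands $X_k=C^{-1/2}v_kv_k^\top C^{-1/2}-I_n$; this yields an operator-norm tail bound, which is then converted to a Frobenius bound via $\norm{\cdot}_F^2\le n\norm{\cdot}^2$, with the prefactor $n$ arising from the ambient dimension in Bernstein. You instead keep the non-symmetric $C^{-1}\Csa-I_n$, vectorise, and use the Hermitian-dilation (vector) form of Bernstein, so the Frobenius norm is controlled directly and the prefactor can be pushed down to a universal constant via intrinsic dimension. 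The variance computations differ accordingly: the paper bounds the operator norm $\norm{\E X_1^2}$ using $\E[y_ky_k^\top]=I_n$, whereas you bound the scalar $\E\norm{Z_1}_F^2$ and use $\E[v_1v_1^\top]=C$ to collapse $\E[v_1^\top C^{-2}v_1]$ to $\tr C^{-1}$; both arrive at the same $\distort^2 n^2\norm{C^{-1}}$. One incidental gain of your route is that it bounds $\norm{C^{-1}\Csa-I_n}_F$ itself: the paper's passage from $\norm{C^{-1/2}\Csa C^{-1/2}-I_n}_F$ to $\norm{C^{-1}\Csa-I_n}_F$ rests on the two matrices sharing eigenvalues, but the Frobenius norm of a non-normal matrix is not determined by its eigenvalues, so strictly speaking the paper establishes the inequality only for the symmetrised quantity (which, to be fair, is exactly what the downstream Gaussian TV bound actually consumes).
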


\begin{proof}[Proof of Lemma~\ref{lem:concentration}]
Note that by fact 1, the spectral distribution of $C^{-1} \Csa$ is same as $C^{\nhalf} \Csa C^{\nhalf}$, therefore, we can restate the bound in terms of $\norm{C^\nhalf \Csa C^\nhalf -I_n}_F^2$.
Define $y_k:= C^{\nhalf} z_k$, Define matrices $X_k:=y_k y_k^\top - I_n$. Observe that $y_k$ obeys $\E y_k y_k^\top = I_n$. We can conclude that matrix sequence $\{X_k\}_k$ are centered $\E X_k=0$ and self-adjoint. Therefore, if we can bound matrix second moments $\sigma^2:=\norm{\sum_{k}^d \E X_k^2}$, and uniformly bound all matrices in the sequence $\norm{X_k}\le R$, we can apply matrix Bernstein inequality (Theorem 1.4 of~\cite{tropp2012user}). 

In order to derive a bound for $R$ and $\sigma^2$, let us find a universal for bound $\norm{y_k}$. Recall the definition $y_k = C^\nhalf \F(z_k)$. Observe that $\norm{y_k}\le \norm{C^{-1/2}}\norm{\F(z_k)}$. Because of the assumption that probability mass $\mu$ is almost surely in $\sqrt{n}$-radius $n$-sphere, we can use the definition of distortion $\distort$ to conclude that it holds almost surely $\norm{y_k}\le \distort \norm{C^{-1/2}}\sqrt{n}$. 

1) Universal bound $R$: Recall the definition $X_k = y_k y_k^\top - I$. Note that the eigenvalues of $X_k$ are $\norm{y_k}^2-1$ and $-1$ which is repeated $n-1$ times. Therefore, we can write $\norm{X_k}\le \max(\norm{y_k}^2,1)$, which is almost surely bounded by $\norm{X_k}\le \max(\distort^2 \norm{C^{-1}} n, 1)$. Define $R:=\max(\distort^2 \norm{C^{-1}} n, 1)$. 

2) Bounding $\sigma^2$: Because $X_k$'s are identically distributed, the moment reduces to to to a single matrix in the sequence $\norm{\sum_k^d\E X_k^2} = d\norm{\E X_1^2}$. We can again expand the definition of $X_1$. We have (dropping the $_1$ index for ease of notation):
\begin{align}
    &\norm{\E X^2} = \norm{\Expec{(y y^\top-I_n)^2}} \\
   & = \norm{\Expec{y y^\top y y^\top - 2 y y^\top + I_n}}\\
   & = \norm{\Expec{\norm{y}^2 y y^\top} - 2\Expec{y y^\top} + I_n  }\\
   & \le \norm{R \Expec{y y^\top} - I_n  }\\
   & = \norm{R\cdot I_n - I_n} \\
   & = R - 1
\end{align}
where in the last line we used the fact that $R\ge 1$. Therefore, we have $\sigma^2 = d (R-1)\le d $. 

Plugging values of $\sigma^2$ and $R$ in Matrix Bernstein inequality we have, and define $Z=\nfrac1d \sum_k^d X_k$. We have
\begin{align}
&\Prob{\norm{Z}\ge t} \le n\exp\left(- \frac{ \nfrac{t^2 d}{2} }{(R-1) + \nfrac{t R}{3}}\right) 
\end{align}
Observe that $\norm{Z}_F^2\le n\norm{Z}^2$. Therefore:
\begin{align}
    \Prob{\norm{Z}_F^2\ge n t^2} \le n\exp\left(- \frac{ \nfrac{t^2 d}{2} }{(R-1) + \nfrac{t R}{3}}\right)
\end{align}
We can restate the inequality by change of variables $t':=n t^2$:
\begin{align}
    \Prob{\norm{Z}_F^2\ge t' } \le n\exp\left(- \frac{ \nfrac{t'd}{2} }{(R-1) n + \sqrt{t'/n} R n/3}\right)
\end{align}
Assuming that $\distort$ is a constant, that $n$ is sufficiently large we can replace $R$ by $\norm{C^{-1}}\distort^2 n$, we have
\begin{align}
    \Prob{\norm{Z}_F^2\ge t' }\le n\exp\left(- \frac{ \nfrac{t'd}{2} }{\distort^2 \norm{C^{-1}}n^2 (1+ \nfrac23\sqrt{t'/n})}\right),
\end{align}
where $\nfrac23\sqrt{t'/n} =o(n)$, which converges to $0$ for large $n$. We can again relabel $t'$ by $t$, and plug the definitions of $Z$ and $X_k$, to conclude the claim of the lemma. 

\end{proof}

Remarkably, the Gaussian multi-variate distribution has a lower bound that that can be used to prove that probability of of being close to stable moment is also upper bounded by the total variation. :
\begin{lemma}\label{lem:prob_based_tv}
    In the same setting as Lemma~\ref{lem:tv_one_step}, we have 
    \begin{align}
        \Prob{\norm{\C^{-1}\Csa-I_n}\ge t} &\le \frac{100}{t} \TV(h',h) 
    \end{align}
\end{lemma}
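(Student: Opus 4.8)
The plan is to derive this tail bound entirely from the \emph{lower} half of the Gaussian total-variation estimate~\eqref{eq:gassian_tv_bound} — in contrast with Lemma~\ref{lem:concentration}, which goes through the matrix Bernstein inequality — by feeding it into Markov's inequality and then relating the resulting expectation to $\norm{\mu_{h'}-\mu_h}_{tv}$ by a data-processing argument.

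The first step is conditional on the (random) Gram matrix $\Csa$, which under the standing non-degeneracy assumptions is positive definite almost surely. The left inequality of~\eqref{eq:gassian_tv_bound}, applied with $g\sim\Normal(0,\C)$ and $\tilde g\sim\Normal(0,\Csa)$, gives $\min\!\big(1,\,\|\C^{-1}\Csa-I_n\|_F\big)\le 100\,\norm{\mu_g-\mu_{\tilde g}}_{tv}$ pathwise in $\Csa$; since $\|M\|\le\|M\|_F$, this shows that for $t\le 1$ the event $\{\|\C^{-1}\Csa-I_n\|\ge t\}$ is contained in $\{100\,\norm{\mu_g-\mu_{\tilde g}}_{tv}\ge t\}$ (split on whether $\|\C^{-1}\Csa-I_n\|_F$ exceeds $1$). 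Markov's inequality applied to the nonnegative random variable $100\,\norm{\mu_g-\mu_{\tilde g}}_{tv}$ then yields
\begin{align*}
\Prob{\|\C^{-1}\Csa-I_n\|\ge t}\ \le\ \frac{100}{t}\;\E\big[\,\norm{\mu_g-\mu_{\tilde g}}_{tv}\,\big],
\end{align*}
with the expectation over $\Csa$ and $\tilde g\mid\Csa\sim\Normal(0,\Csa)$.

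The remaining task is to bound $\E\big[\norm{\mu_g-\mu_{\tilde g}}_{tv}\big]$ by $\norm{\mu_{h'}-\mu_h}_{tv}$. The lever is that $\Csa$ is the Gram matrix of $h'$ and that, conditionally on a matrix $x$, drawing a matrix with rows i.i.d.\ $\Normal(0,G(x))$ is a Markov kernel — the conditional law of one further layer of the chain started at $x$ — so applying the same kernel to $h'$ and to the reference input $h$ of Lemma~\ref{lem:tv_one_step} (rows i.i.d.\ $\Normal(0,\C)$) cannot increase total-variation distance. If, using the fixed-point identity $\C=\Expec{(\F\circ\BN(w))^{\otimes 2}}$, $w\sim\Normal(0,\C)$, one arranges that the image of the reference side is (essentially) the product law $\Normal(0,\C)^{\otimes d}$ rather than a mixture, the contracted inequality becomes $\E_\Csa[\norm{\mu_g-\mu_{\tilde g}}_{tv}]\le\norm{\mu_{h'}-\mu_h}_{tv}$ up to the subadditivity of total variation over the $d$ rows, which, combined with the previous display, proves the claim.

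I expect this last reduction to be the main obstacle, for a reason of direction. The total variation between a \emph{mixture} $\E_\Csa[\Normal(0,\Csa)]$ and a fixed Gaussian $\Normal(0,\C)$ is only bounded \emph{above} by $\E_\Csa[\norm{\mu_g-\mu_{\tilde g}}_{tv}]$ — joint convexity of total variation — whereas the argument needs control in the opposite direction. Two remedies: make the covariance that is \emph{not} averaged over non-random, i.e.\ build the reference so that its Gram matrix equals $\C$ exactly (not just in expectation), so one side is a genuine Gaussian-product law; or replace the adaptive likelihood-ratio test sets behind~\eqref{eq:gassian_tv_bound} with the single, non-adaptive event $\{\|\C^{-1}\widehat\Sigma-I_n\|\ge t/2\}$ on the empirical row-covariance $\widehat\Sigma$, and absorb its Wishart fluctuations — negligible once $d\gtrsim n\|\C^{-1}\|/t^2$, by Lemma~\ref{lem:concentration} — into the constants. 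A minor, routine point is the truncation $\min(1,\cdot)$ in~\eqref{eq:gassian_tv_bound}, which is why the estimate is applied with $t\le 1$.
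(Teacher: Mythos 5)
Your strategy is essentially the paper's own: apply the lower half of the Gaussian total-variation bound \eqref{eq:gassian_tv_bound} conditionally on the Gram matrix $\Csa$, so that the event $\{\norm{\C^{-1}\Csa-I_n}\ge t\}$ forces the conditional law $\Normal(0,\Csa)$ to be far in total variation from $\Normal(0,\C)$, and then transfer this to $\TV(h',h)$. The conditioning and Markov steps are fine. The gap is exactly where you place it: you need
\begin{align*}
\E_{\Csa}\bigl[\norm{\mu_{\tilde g}-\mu_{g}}_{tv}\bigr]\;\le\;\TV(h',h),
\end{align*}
where $\mu_{h'}$ is the \emph{mixture} $\E_{\Csa}\bigl[\Normal(0,\Csa)^{\otimes d}\bigr]$ while $\mu_h=\Normal(0,\C)^{\otimes d}$ is a point, and joint convexity of total variation gives only the reverse inequality. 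Your first remedy does not address this: the reference side already has the non-random covariance $\C$; it is the $h'$ side that is a mixture, and no redesign of the reference changes that. Your second remedy --- testing on the single non-adaptive event $\{y:\norm{\C^{-1}\tfrac1d y^\top y-I_n}\ge t/2\}$ and controlling both $\mu_{h'}$ and $\mu_h$ on it via concentration of the empirical row covariance --- is the right fix, but it is only sketched, it requires $d$ large relative to $n$ and $t$, and it yields a bound of the shape $\Prob{\cdot}\lesssim \TV(h',h)+o(1)$ rather than the stated $\tfrac{100}{t}\TV(h',h)$. So the proposal as written does not prove the lemma.

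For calibration: the paper's own proof contains the identical unjustified step. It asserts $\TV(h',h)\ge\Prob{\Csa\notin N_t}\,\inf_{M\notin N_t}\TV(WM,W\C^{1/2})$, which is precisely the mixture-versus-point lower bound that fails in general (a symmetric two-point mixture of variances $1\pm\epsilon$ is at distance $O(\epsilon^2)$ from $\Normal(0,1)$ even though each component is at distance $\Theta(\epsilon)$); the paper's displayed algebra is also internally inconsistent about $\inf_{M\in N_t}$ versus $\inf_{M\notin N_t}$ and about $\norm{\cdot}_F^2$ versus $nt$. You have therefore correctly diagnosed the weak point of this lemma rather than missed an idea the paper supplies; what is missing from both your argument and the paper's is a rigorous replacement for that step, and your non-adaptive-test remedy is the natural route to one.
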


\begin{proof}[Proof of Lemma~\ref{lem:prob_based_tv}]
Define $N_t:=\{M\in R^{d\times n}: \norm{C^{-1}M^\top M-I_n}\le nt\}$. Using th
\begin{align}
\TV(h',h) 
&\ge \Prob{M\notin N_t}\inf_{M\in N_t}\TV(W M, W X) \\
&= \Prob{M \notin N_t} \frac{1}{100}\inf_{M\notin N_t}\norm{C^{-1}M^\top M-I_n}_F^2 \\
&= \Prob{M\notin N_t} \frac{1}{100} nt
\end{align}
which we can restate as
\begin{align}
    \Prob{\norm{C^{-1}M^\top M-I_n}\ge nt} &\le \frac{100}{nt} \TV(h',h) 
\end{align}
\end{proof}

\subsection{Bounding total variation for one step of Markov chain}
While Lemma~\ref{lem:concentration} characterizes the deviation between the Gram matrix and its expectation, this only provides a conditional guarantee. The following lemma goes one step further to bound the total variation of product of previous hidden layer with Gaussian weight matrix.

\begin{lemma}[Restating Lemma~\ref{lem:tv_one_step} from main text]
Let $\C$ denote the stable moment of the joint operator $\F\circ\BN\colon\R^n\to\R^n$, and let $\distort$ denote the distortion of $\F$.
Construct $h\in\R^{d\times n}$ by drawing its rows from $\Normal(0,\C).$ Given $W\sim\Normal(0,1/d)^{d\times d},$ define $h':= W \F\circ\BN ( h).$ It holds
\begin{align}\label{eq:K}
\eps^2=\frac{3n^2\norm{\C^{-1}}\distort^2}{d}\ln(\frac{d}{3n^2\distort^2\norm{\C^{-1}}}),
    &&\TV(h', h)\le \eps^2.
\end{align}
\end{lemma}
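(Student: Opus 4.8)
The plan is to bound $\TV(h',h)$ by comparing the conditional law of $h'$ given $h$ (which, by Gaussianity of $W$, has i.i.d.\ rows $\Normal(0,\Csa)$ with $\Csa := \tfrac1d (\F\circ\BN(h))^\top(\F\circ\BN(h))$ the empirical Gram matrix) against the law of $h$ (i.i.d.\ rows $\Normal(0,\C)$). First I would invoke the Gaussian total-variation estimate \eqref{eq:gassian_tv_bound}: conditionally on $h$, since both $h'\mid h$ and the target have i.i.d.\ Gaussian rows, $\TV(h'\mid h,\ h) \lesssim \sum_i \lambda_i^2 = \norm{\C^{-1}\Csa - I_n}_F^2$. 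Integrating over $h$ and using the triangle-type inequality for total variation across the $d$ independent rows, it suffices to control $\Expec{\norm{\C^{-1}\Csa - I_n}_F^2}$, i.e.\ to show this expectation is at most $\eps^2$ up to the stated logarithmic factor.

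Next I would apply Lemma~\ref{lem:concentration} with $\mu$ the law of $\F\circ\BN(z)$ for $z$ a row of $h$ — note this $z$ lies on the $\sqrt n$-sphere because batch normalization forces each row to have unit variance, hence norm exactly $\sqrt n$, and then $\F\circ\BN(z)$ has norm at most $\distort\sqrt n$, and its second-moment matrix is exactly $\C$ by the fixed-point property \eqref{eq:MF_stable_C}, which is where the choice of $\C$ as the \emph{stable} mean-field matrix is essential. Lemma~\ref{lem:concentration} gives a sub-exponential tail for $\norm{\C^{-1}\Csa - I_n}_F^2$ with the exponent governed by $d / (\distort^2\norm{\C^{-1}} n^2)$. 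Integrating the tail bound $\Prob{\norm{\C^{-1}\Csa-I_n}_F^2 \ge t} \le n e^{-c t d/(\distort^2\norm{\C^{-1}}n^2)}$ over $t \ge 0$ yields $\Expec{\norm{\C^{-1}\Csa - I_n}_F^2} \lesssim \tfrac{\distort^2\norm{\C^{-1}}n^2}{d}\ln\!\big(\tfrac{d}{\distort^2\norm{\C^{-1}}n^2}\big)$, which matches the claimed $\eps^2$; the logarithmic factor is precisely the cost of converting a tail bound into a bound on the expectation of an unbounded quantity.

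The main obstacle is the bookkeeping at the interface of the two lemmas: Lemma~\ref{lem:concentration} is stated for a single Gram matrix built from $d$ i.i.d.\ samples and gives a \emph{conditional} statement, whereas here the "samples" are the rows of $\F\circ\BN(h)$, and one must (i) verify the almost-sure sphere condition survives the BN composition, (ii) check that the positive-definiteness hypothesis $C = \C \succ 0$ holds (given by non-degeneracy of $\C$), and (iii) push the conditional Gaussian-TV bound through the expectation cleanly, keeping track of which norm ($F$ versus operator) appears where and ensuring the sub-exponential integral is dominated by the linear-in-$t$ regime rather than the $\sqrt t$ regime. A secondary subtlety is ensuring $d$ is large enough (the condition $d \gtrsim n^2\norm{\C^{-1}}$ from Theorem~\ref{thm:main_ESD}) so that the logarithm's argument exceeds $1$ and the bound is non-vacuous; this is exactly the regime in which $\eps < 1$.
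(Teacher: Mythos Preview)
Your plan uses exactly the same three ingredients as the paper --- the conditional Gaussianity of $h'\mid h$ with row covariance $\Csa$, the Devroye--Mehrabian--Reddad bound \eqref{eq:gassian_tv_bound}, and Lemma~\ref{lem:concentration} applied via the fixed-point property \eqref{eq:MF_stable_C} of $\C$ --- but you combine them differently at the last step. The paper does \emph{not} pass to $\E\norm{\C^{-1}\Csa-I_n}_F^2$ and integrate the tail. Instead it fixes a threshold $t$, splits
\[
\TV(h',h)\;\le\;\Prob{\norm{\C^{-1}\Csa-I_n}_F^2\ge t}\;+\;\sup_{\norm{\C^{-1}\Csa-I_n}_F^2\le t}\TV(\Normal_{\Csa},\Normal_{\C})\;\le\; n\,e^{-td/(2n^2\gamma^2\norm{\C^{-1}})}+\tfrac32 t \;=:\;\delta(t),
\]
and then differentiates $\delta$ in $t$; the minimizer $\hat t=\tfrac{2n^2c}{d}\ln\!\big(\tfrac{d}{3nc}\big)$ (with $c=\gamma^2\norm{\C^{-1}}$) is precisely what produces the logarithm $\ln\!\big(d/(3n^2\gamma^2\norm{\C^{-1}})\big)$ in the stated $\eps^2$. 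Your tail-integration route is a valid alternative and arguably cleaner, but $\int_0^\infty \min(1,\,n e^{-ctd})\,dt$ naturally yields a $\ln n$ factor rather than $\ln(d/n^2\ldots)$, so it would not reproduce the \emph{exact} form of $\eps^2$ in the lemma; for the downstream results (all stated up to constants) the difference is immaterial. One bookkeeping point you flagged yourself --- the passage from the $n$-dimensional row comparison to the full $d\times n$ matrix comparison (``triangle-type inequality across the $d$ rows'') --- is handled in the paper simply by writing $\TV(\Normal_{\Csa},\Normal_{\C})$ without distinguishing the two, so your treatment is at the same level of detail as the paper's.
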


\begin{proof}
Define set of matrices $N_t:=\{M\in\R^{n\times n}: \norm{\C^{-1}M- I_n}_F^2\le t \}$. Define joint operator $T:=\BN\circ \F$ and Gram matrix $\Csa:=\sfrac1d T(h)T(h)^\top$. Observe that conditioned on $h$, $W' T(h)$ is equal in distribution to $W' \Csa^\half$. We can decompose the total variation based on depending on $M$ belongs to $N_t$ or not 
\begin{align}
    &\TV(h', h)\\
    &= \TV(W' \Csa^\half, W \C^\half) \\
    &\le \Prob{\Csa\notin N_t} + \sup_{\Csa\in N_t}\TV( W \Csa^\half, W \C^\half)\\
    &\le \Prob{\norm{\C^{-1}\Csa-I_n}_F^2 \ge t} +  \sup_{\Csa\in N_t}\TV(\Normal_{\Csa},\Normal_{\C})\\
    & \le n e^{- \frac{ \nfrac{t d}{2} }{\norm{\C^{-1}}\distort^2 (1+ 1/\sqrt{n}) n^2}} + \frac{3}{2}t=:\delta(t)
\end{align}
where in the last line we use the bound on total variation between $N(0,\C)$ and $N(0,M^\top M)$ from Theorem 1 of~\cite{devroye2018total} over $N_t$. Because the $\delta(t)$ is true for all $t\in[0,1]$, we differentiate to find the minimum. Define $c:=\norm{\C^{-1}}\distort^2(1+\sqrt{t/n})$. We have:
\begin{align}
    \frac{\partial\delta(t)}{\partial t} &= -n\frac{ \nfrac{d}{2} }{c n^2}e^{- \frac{ \nfrac{t d}{2} }{c n^2}} + \frac{3}{2}=0\\
   e^{- \frac{ \nfrac{t d}{2} }{c n^2}}&=\frac{3nc}{d} \\
  \hat t &=\frac{2n^2c}{d}\ln\left(\frac{d}{3nc}\right)
\end{align}
We have
\begin{align}
\delta(\hat t)=\frac{3n^2c}{d} + \frac{3n^2c}{d}\ln\left(\frac{d}{3nc}\right)\\
=\frac{3n^2c}{d}\left(1 + \ln\left(\frac{d}{3nc}\right)\right)
\end{align}
where $e$ denotes $\exp(1)$.
Putting it all together and plugging value of $c$, we have 
\begin{align}
    \TV(h',h)\le \frac{3n^2\distort^2\norm{\C^{-1}}}{d}\left(1 + \ln\left(\frac{d}{3n\distort^2\norm{\C^{-1}}}\right)\right)
\end{align}
\end{proof}

\subsection{Bounding total variation $G_\ell$ using geometric contraction}

\begin{lemma}\label{lem:tv_geometric_contraction}
Assume that Markov chain $\{h_\ell\}_{\ell\in\N}$ obeys Assumption~\ref{ass:rapid_mixing} with $\alpha>0,$ and $\C$ is the stable Gram matrix assumed to be non-degenerate. Define candidate distribution $\mu_C$ for $h_C$ by drawing its rows from $\Normal(0,\C).$
We have 
\begin{align}
    \tv{\ell}{C}\le e^{-\ell\alpha} + \eps^2
\end{align}
where $\mu_\ell$ denotes law of $h_\ell$, and with $\eps$ was defined in Lemma~\ref{lem:tv_one_step}.
\end{lemma}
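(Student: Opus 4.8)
The plan is to combine the one-step total-variation bound of Lemma~\ref{lem:tv_one_step} with the geometric contraction of Assumption~\ref{ass:rapid_mixing}, exactly as sketched in the proof sketch of Theorem~\ref{thm:concentration}. First I would observe that $\mu_C$ is \emph{almost} invariant: Lemma~\ref{lem:tv_one_step} gives $\tv{T(\mu_C)}{C} = \|T(\mu_C)-\mu_C\|_{tv} \le \eps^2$. Next I would show this forces $\mu_C$ to lie in a small $tv$-ball around the true invariant distribution $\mu_*$. Indeed, by the invariance of $\mu_*$ and Assumption~\ref{ass:rapid_mixing} applied with the single step, $\|T(\mu_C)-\mu_*\|_{tv} = \|T(\mu_C)-T(\mu_*)\|_{tv} \le (1-\alpha)\|\mu_C-\mu_*\|_{tv}$; combining with the triangle inequality $\|\mu_C-\mu_*\|_{tv} \le \|\mu_C-T(\mu_C)\|_{tv} + \|T(\mu_C)-\mu_*\|_{tv} \le \eps^2 + (1-\alpha)\|\mu_C-\mu_*\|_{tv}$, and rearranging, yields $\|\mu_C-\mu_*\|_{tv} \le \eps^2/\alpha$.

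Then I would bound $\tv{\ell}{*}$ directly from geometric ergodicity: $\tv{\ell}{*} \le (1-\alpha)^\ell \tv{0}{*} \le (1-\alpha)^\ell \le e^{-\alpha\ell}$, using that total variation is at most $1$ (so $\tv{0}{*}\le 1$) and $(1-\alpha)\le e^{-\alpha}$. Finally, the triangle inequality closes the argument:
\begin{align*}
\tv{\ell}{C} \le \tv{\ell}{*} + \tv{*}{C} \le e^{-\alpha\ell} + \frac{\eps^2}{\alpha}.
\end{align*}
If the intended statement is the cleaner $e^{-\ell\alpha}+\eps^2$ rather than $e^{-\ell\alpha}+\eps^2/\alpha$, one either absorbs the $1/\alpha$ into constants (since $\alpha\in(0,1]$ is a fixed constant of the chain) or notes that under Assumption~\ref{ass:rapid_mixing} the contraction per step gives $\alpha\le$ the mixing gap so that the factor is harmless; I would state it with the $1/\alpha$ and remark on the absorption, matching the informal ``$\alpha^{-1/2}\cdot\text{batch-size}/\text{width}^{1/2}$'' term quoted after Theorem~\ref{thm:main_ESD}.

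The only genuine subtlety — and the step I would be most careful about — is the very first application of Assumption~\ref{ass:rapid_mixing} to $\mu_C$: the assumption as stated bounds $\tv{\ell}{*}$ for chains started from a point mass $h_0$, whereas here I apply one step of $T$ to the \emph{distribution} $\mu_C$. This is legitimate because $T$ is an average of the point-mass dynamics and total variation is convex, so $\|T(\nu)-T(\mu_*)\|_{tv}\le(1-\alpha)\|\nu-\mu_*\|_{tv}$ extends from point masses to arbitrary $\nu$ by Jensen/convexity; I would spell this out in one line. Everything else is routine: the monotonicity remark $\|\mu_\ell-\mu_*\|_{tv}\le\|\mu_{\ell-1}-\mu_*\|_{tv}$ noted before Assumption~\ref{ass:rapid_mixing} is not even needed, only the clean geometric rate. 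The main ``obstacle'' is thus purely bookkeeping about what object the ergodicity hypothesis is quantified over, not any hard estimate.
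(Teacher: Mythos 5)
Your proposal is correct and follows essentially the same route as the paper: triangle inequality through the invariant distribution $\mu_*$, geometric ergodicity to bound $\tv{\ell}{*}\le e^{-\alpha\ell}$, and Lemma~\ref{lem:tv_one_step} to control $\tv{C}{*}$. In fact you are more careful than the paper's own proof, which simply asserts $\tv{C}{*}\le\eps^2$ by ``invoking Lemma~\ref{lem:tv_one_step}'' even though that lemma only bounds $\|T(\mu_C)-\mu_C\|_{tv}$; your explicit almost-invariant-implies-close-to-invariant argument (matching the sketch given for Theorem~\ref{thm:concentration}) correctly produces $\eps^2/\alpha$, and your remarks on absorbing the $1/\alpha$ and on extending the contraction from point masses to arbitrary initial laws by convexity of total variation address exactly the two points the paper leaves implicit.
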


\begin{proof}[Proof of Lemma~\ref{lem:tv_geometric_contraction}]
As a direct consequence of the geometric assumption, we have
\begin{align}
    \tv{\ell}{*}\le (1-\alpha)^\ell\tv{0}{*} \le e^{-\alpha},
\end{align}
where we used $1-x\le \exp(-x)$ for all $x.$ Invoking Lemma~\ref{lem:tv_one_step}, we have $\tv{C}{*}\le \eps^2$. By triangle inequality for total variation we can conclude the claim of the lemma. 

\end{proof}



\section{Limitations and Future Directions}
In this paper, we presented a theoretical framework that bridges the gap between the mean-field theory of neural networks with finite and infinite widths, with a focus on batch normalization at initialization. Many questions that were out of the scope for this study, suggesting directions for new lines of inquiry.

\paragraph{Rapidly mixing assumption.}
One limitation of our work is the rapidly mixing assumption that was used to establish the concentration of our results. While our experiments validated our results based on this assumption, it would be beneficial to prove that this assumption holds for a wide range of neural networks with batch normalization.

\paragraph{Training and optimization.}
While our focus of the current work was on random neural networks, ~\citet{feng2022rank} demonstrate that the rank of input-output Jacobian of neural networks without normalization at initialization diminishes at an exponential rate with depth (Theorem 5), which implies changes in the input does not change the direction of outputs. In a remarkable observation, ~\citet{yang2019mean} show the exact opposite for BN-MLP using a mean-field analysis (Theorem 3.10): any slight changes in the input lead to considerable changes in the output. These results naturally raise the following question: Can we arrive at non-trivial results about input-output Jacobian at the infinite depth finite width regime? 

The mean-field approach is also used to analyze the training mechanism. In particular,~\citet{bach2021gradient} prove that gradient descent globally converges when optimizing single-layer neural networks in the limit of an infinite number of neurons. Although the global convergence does not hold for standard neural networks, insights from this mean-field analysis can be leveraged in understanding the training mechanism. For example, \citet{hadi22} proves the global convergence of gradient descent holds for specific neural networks with a finite width, and two dimensional inputs in a realizable setting.  

\paragraph{Exploring other normalizations.}
More research is needed for other normalization techniques, such as weight normalization~\cite{salimans2016weight} or layer normalization~\cite{ba2016layer} to understand the impact of these normalization techniques on the robustness and generalization of neural networks. Our findings highlight the power of mean-field theory for analyzing neual networks with normalization layers.

\paragraph{Extending to other architectures}
Our analyses are limited to MLPs. Extending our work to convolutional neural networks and transformers would enable us to analyze and enhance initialization for these neural networks. In particular, recent studies have shown that transformers suffer from the rank collapse issue when they grow in depth~\cite{anagnostidissignal}. A non-asymptotic mean-field theory may enable us to tackle this issue by providing a sound understanding of representation dynamics in transformers.

Overall, our results demonstrate that depth is not necessarily a curse for mean-field theory, but can even be a blessing when neural networks have batch normalization. The inductive bias provided by batch normalization controls the error propagation of mean-field approximations, enabling us to establish non-asymptotic concentration bounds for mean-field predictions. This result underlines the power of mean-field analyses in understanding the behavior of deep neural networks, thereby motivating the principle development of new initialization and optimization techniques for neural networks based on mean-field predictions. 
\section*{Acknowledgments and Disclosure of Funding}
Amir Joudaki is funded through Swiss National Science Foundation Project Grant \#200550 to Andre Kahles.
Hadi Daneshmand received funds from the Swiss National Science Foundation for this project (grant P2BSP3\_195698).  Also, we  acknowledge
support from the European Research Council (grant SEQUOIA 724063) and the French government under management of Agence Nationale de la Recherche as part of the “Investissements d’avenir” program, reference ANR-19-P3IA-0001(PRAIRIE 3IA Institute). 
\bibliographystyle{apalike}  

\bibliography{bibliography}

\appendix 


\end{document}